\newtheorem{theorem}{Theorem}
\newtheorem{lemma}{Lemma}
\newtheorem{assumption}{Assumption}
\newtheorem{corollary}{Corollary}
\newtheorem{remark}{Remark}
\newcommand*{\rom}[1]{\expandafter\@slowromancap\romannumeral #1@}
\newcommand{\mb}[1]{\mathbf{#1}}
\newcommand{\mc}[1]{\mathcal{#1}}
\newcommand{\bmat}{\begin{bmatrix}}
\newcommand{\emat}{\end{bmatrix}}
\newcommand{\VEC}[1]{\mb{#1}}
\newcommand{\state}{\VEC{s}}
\newcommand{\action}{\VEC{a}}   
\newcommand{\vectheta}{\bm{\theta}}
\newcommand{\bigtheta}{\bm{\Theta}}
\newcommand{\veclambda}{\bm{\lambda}}
\newacronym{aoa}{AoA}{angles of arrival}
\newacronym{awgn}{AWGN}{additive white Gaussian noise}
\newacronym{bbu}{BBU}{baseband unit}
\newacronym{cdf}{CDF}{cumulative distribution function}
\newacronym{cee}{CEE}{channel estimation error}
\newacronym{cp}{CP}{cyclic prefix}
\newacronym{cpri}{CPRI}{common public radio interface}
\newacronym{cran}{C-RAN}{centralized radio access network}
\newacronym{csi}{CSI}{channel state information}
\newacronym{dft}{DFT}{discrete Fourier transform}
\newacronym{ds}{DS}{direction-selection}
\newacronym{du}{DU}{digital unit}
\newacronym{fft}{FFT}{fast Fourier transform}
\newacronym{fh}{FH}{Fronthaul}
\newacronym{phy}{PHY}{physical layer}
\newacronym{irc}{IRC}{interference rejection combining}
\newacronym{ici}{ICI}{inter-cell interference}
\newacronym{lls}{LLS}{lower-layer split}
\newacronym{los}{LoS}{line-of-sight}
\newacronym{ls}{LS}{least square}
\newacronym{lte}{LTE}{Long Term Evolution}
\newacronym{mimo}{MIMO}{multiple-input-multiple-output}
\newacronym{mmse}{MMSE}{minimum mean-square error}
\newacronym{mrc}{MRC}{maximum ratio combining}
\newacronym{mumimo}{MU-MIMO}{multiuser-MIMO}
\newacronym{nr}{NR}{New Radio}
\newacronym{pca}{PCA}{Principal Component Analysis}
\newacronym{ran}{RAN}{radio access network}
\newacronym{rf}{RF}{radio frequency}
\newacronym{rru}{RRU}{remote radio unit}
\newacronym{snr}{SNR}{signal-to-noise-power ratio}
\newacronym{sir}{SIR}{signal-to-interference-power ratio}
\newacronym{sinr}{SINR}{signal-to-interference-and-noise ratio}
\newacronym{svd}{SVD}{singular value decomposition}
\newacronym{tco}{TCO}{total-cost-of-ownership}
\newacronym{ue}{UE}{user equipment}
\newacronym{ula}{ULA}{uniform linear array}
\newacronym{ul}{UL}{uplink}
\newacronym{zf}{ZF}{zero-forcing}
\newcommand{\norm}[1]{ \lVert#1 \rVert} 
\def\BibTeX{{\rm B\kern-.05em{\sc i\kern-.025em b}\kern-.08em
    T\kern-.1667em\lower.7ex\hbox{E}\kern-.125emX}}
\begin{document}

\title{Adaptive Stochastic ADMM for Decentralized Reinforcement  Learning in Edge Industrial IoT}

\author{
Wanlu~Lei, \IEEEmembership{Student Member, IEEE}, Yu~Ye, \IEEEmembership{Student Member, IEEE}, Ming~Xiao, \IEEEmembership{Senior Member, IEEE}, \\Mikael Skoglund, \IEEEmembership{Fellow, IEEE}, and Zhu Han, \IEEEmembership{Fellow, IEEE}
	\thanks{W. Lei is with Ericsson AB, Stockholm. W. Lei, Y. Ye, M. Xiao and M. Skoglund are with the Division of Information Science and Engineering, KTH Royal Institute of Technology, Stockholm, Sweden (e-mail: \{wllei, yu9, mingx, skoglund\}@kth.se).
	
	Z. Han is with the University of Houston, Houston, USA (email:
    zhan2@uh.edu). 
	}
}
\maketitle

\begin{abstract}
Edge computing provides a promising paradigm to support the implementation of Industrial Internet of Things (IIoT) by offloading tasks to nearby edge nodes. Meanwhile, the increasing network size makes it impractical for centralized data processing due to limited bandwidth, and consequently a decentralized learning scheme is preferable. 
Reinforcement learning (RL) has been widely investigated and shown to be a promising solution for decision-making and optimal control processes. For RL in a decentralized setup, edge nodes (agents) connected through a communication network aim to work collaboratively to find a policy to optimize the global reward as the sum of local rewards.  However, communication costs, scalability and adaptation in complex environments with heterogeneous agents may significantly limit the performance of decentralized RL. Alternating direction method of multipliers (ADMM) has a structure that allows for decentralized implementation, and has shown faster convergence than gradient descent based methods. Therefore, we propose an adaptive stochastic incremental ADMM (asI-ADMM) algorithm and apply the asI-ADMM to decentralized RL with edge-computing-empowered IIoT networks. We provide convergence properties for proposed algorithms by designing a Lyapunov function and prove that the asI-ADMM has $O(\frac{1}{k}) +O(\frac{1}{M})$ convergence rate where $k$ and $ M$ are the number of iterations and batch samples, respectively. Then, we test our algorithm with two supervised learning problems. For performance evaluation, we simulate two applications in decentralized RL settings with homogeneous and heterogeneous agents. The experiment results show that our proposed algorithms outperform the state of the art in terms of communication costs and scalability, and can well adapt to complex IoT environments.

\end{abstract}
\glsresetall

\begin{IEEEkeywords}
Reinforcement learning; decentralized edge computing; stochastic ADMM; communication efficiency
\end{IEEEkeywords}

\section{Introduction}\label{sec:intro}
\subsection{Background}
The Industrial Internet of Things (IIoT), as a subset of IoT, interconnects a multitude of industrial devices, actuators, and people at work. 
The fourth industrial revolution aims to tackle a set of new technological challenges in industrial control, automation and intelligence \cite{liu2020latency}. 
Currently, many of the data from the IIoT networks are processed and stored off-site in remote areas (i.e., cloud). However, due to the limitation of network bandwidth, response latency and privacy, it is often inefficient or even impractical to send all the data to faraway central computing nodes. To address the aforementioned problems, edge computing empowered IIoT is proposed as a promising solution, in which edge nodes, such as sensors, actuators and small cells, are equipped with computation, storing and resource managing capability to process and store the data locally \cite{leong2020deep,she2020deep,lin2021deploying,nguyen2020trusted,liu2020over}. Edge computing for IIoT is also known as a decentralized cloud, or distributed cloud solution to address the drawbacks of cloud-centric models \cite{mach2017mobile,pan2017future}. 
Multiple edge nodes (IIoT devices) can work jointly to solve large-scale distributed tasks, and collaborate to make decisions to enable feasible and diverse IIoT applications. 
\begin{figure} [t]
\vskip -0.0 in
\centering
\includegraphics[width=86mm]{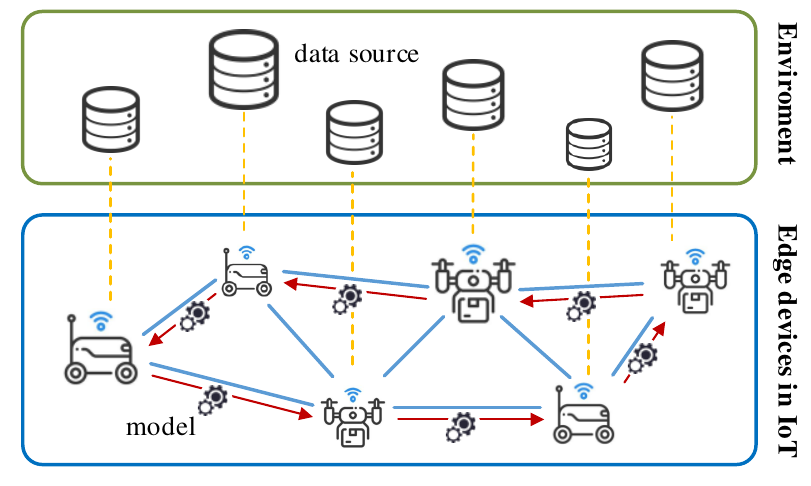}
\caption{Decentralized edge computing in IIoT networks.}
\label{fig:system model}
\vskip -0.25 in
\end{figure}

Machine learning (ML) is often employed in IIoT edge computing systems to analyze a large amount of data or obtain useful information for a variety of tasks \cite{ baek2019managing,lei2020deep,qian2020reinforcement}. Among various ML schemes, reinforcement learning (RL) has been intensively studied for decision-making and optimal control related applications in edge computing, e.g., IoT localization services \cite{mohammadi2017semisupervised}, wireless cloud controller \cite{de2020wireless}, and resource allocation for spectrum and computation with radio access technologies \cite{liu2020multi}. In RL, agents (nodes) take actions in a stochastic environment over a sequence of time steps, and learn an optimal policy to minimize the long-term cumulative cost from interacting with the environment. Though RL was first developed for a single-agent task, to facilitate the development in distributed computing, many practical RL tasks involve multiple agents operating in a distributed way \cite{shalev2016safe,liu2020multi,mnih2016asynchronous}. However, these tasks normally require frequent information exchanges between agents. With more devices deployed at the edge, the communication overhead can be very large, which becomes the bottleneck of overall performance. In addition to communication load, learning networks may have heterogeneous agents, where some agents have less computation power and thus slow down the overall convergence \cite{liu2020multi}.


Generally, ML with multiple nodes in decentralized computing can be formulated in a form where all distributed agents seek to collaboratively solve one optimization problem:
\begin{equation}
    f(\vectheta) = \min_{\vectheta} \sum_{i=1}^N f_i(\vectheta;\mc D_i),\label{eq:p1}
\end{equation}
where $f_i:\mathbb R^{p\times d} \to \mathbb R$ is the local loss function only known to agent $i$, and $\mc D_i$ is the corresponding local dataset. Variable $\vectheta$ is shared across all agents. Decentralized RL tasks with parametrized approximation also fit in the same form as (\ref{eq:p1}) only with the local loss as the expectation of individual cumulative costs.

\subsection{Related Work}
RL normally makes sequential decisions in dynamic situation where the environment evolves in time with uncertainties. The key idea of RL is learning through interacting with the environment. Generally modeled as a Markov decision process (MDP)\cite{sutton1998introduction}, the sequential-decision-making problems have been tackled by various RL algorithms, including Q-learning \cite{watkins1992q}, policy-gradient (PG) \cite{baxter2001infinite}, actor-critic methods \cite{konda2000actor}, etc. Due to huge success in Alpha Go \cite{silver2017mastering} defeating human players in the game of Go, many control-related applications in IIoT edge computing are developed using RL. Control tasks in IIoT networks are in a wide range, such as computation offloading \cite{liu2020multi}, resource allocation in fog networks \cite{hanzhu1}, and localization services in smart cities \cite{mohammadi2017semisupervised}. To exploit these applications in a decentralized setting without a central controller, frequent information exchange can occur between agents (edge nodes in IIoT edge computing). Besides, RL typically requires substantial historical data and computation resources for improving performance \cite{bertsekas2019reinforcement}. 

Decentralized RL in IIoT applications normally have two settings: parallel RL and multi-agent cooperative RL. Parallel RL is motivated by solving large-scale RL tasks that run in parallel on multiple learners. Parallel may have good scalability as well as the robustness of the multi-learner system. \cite{mnih2016asynchronous} introduces asynchronous methods and shows that parallel learners have a stabilizing effect on training processes since the training time reduces to half on a single multi-core CPU. \cite{nair2015massively} presents the massively distributed architecture for deep RL and shows that the performance surpasses most objects by reducing wall-time by an order of magnitude. In fully decentralized cooperative multi-agent RL (MARL), agents share a global state and each agent only observes its local loss. The goal of cooperative RL is to jointly minimize (maximize) global cost (reward). The work in \cite{zhang2018fully} is the first theoretical study of fully decentralized MARL. \cite{liu2020multi} proposes independent learner based multi-agent Q-learning for resource allocation in IoT networks. 

Most decentralized RL schemes mainly use gradient methods, which are directly extended from single-agent learning. 
Reference \cite{zhao2020distributed} applies the inexact ADMM approach in distributed MARL. However, the communication cost increases with the network size in \cite{zhao2020distributed}. \cite{hanzhu2} proposes game-based ADMM and shows that the convergence rate is independent of the network size. Another work in \cite{chen2019communication} proposes LAPG algorithm to reduce the communication overhead by adaptively skipping the gradient communication during iteration. However, the setting still involves a central controller and is based on the gradient descent method. 

Decentralized solvers in the ML optimization tasks, as in (\ref{eq:p1}), can normally be classified into primal and primal-dual methods. The primal method is commonly referred as gradient-based  \cite{nedic2009distributed, DGD,shi2015extra,rabbat2005quantized}. Each node averages its iterations from neighbors and descends along its local negative gradient. Normally, decentralized gradient-descent (DGD) \cite{DGD} and EXTRA \cite{shi2015extra} have good convergence rates with respect to its iteration number (corresponding to computation time). Moreover, gradient-based algorithms are shown to have constrained error bounds for constant step sizes \cite{DGD}, and can achieve exact convergence with diminishing step sizes at the price of slow convergence speed \cite{qu2019accelerated}. The primal-dual methods solve an equivalent constrained form of (\ref{eq:p1}) (see (\ref{eq:admmP2}) in Section.~\ref{sec:admm}). Many pioneering works, such as the distributed ADMM (D-ADMM) in \cite{DADMM}, the communication-censored ADMM (COCA) in \cite{COCA} and random-walk ADMM \cite{WADMM} are proposed to limit information sharing in each iteration. D-ADMM is similar as DGD, which requires each agent to collect information from all its neighbors. COCA can adaptively determine whether a message is informative during the optimization process. Following COCA, W-ADMM is an extreme instance where only one agent is randomly picked to be active per iteration. 

Apart from communication bottleneck, algorithm efficiency is also an important performance indicator. As for stochastic objectives, taking subsamples of data instead of full dataset can make optimization more efficient. In particular, stochastic gradient-descent (SGD)\cite{strang2019linear} and its variants have drawn a lot of research interests. Researchers have also applied variance reduction techniques for ADMM, such as \cite{liu2017accelerated,huang2019faster}. However, they all require large storage to keep past gradients, which can be problematic in large multitask learning. 
Moreover, the data samples in decentralized IIoT edge computing is generated at edge nodes and stored locally. Therefore, the data distribution can be non-i.i.d. The problem is even more pronounced in RL, in which the data sample distribution can change throughout the learning dynamics \cite{papini2018stochastic}. To address this issue, the state-of-the-art method, Adam \cite{adam}, uses the first-order gradient for update, and is computationally efficient, and requires little memory requirements. Thus Adam may be suited for large-scale learning. However, Adam is still an SGD-based method and cannot be well suited for complicated learning problems.

\subsection{Contributions}
Aiming at a communication efficient, scalable and adaptive decentralized RL scheme for edge-computing-empowered IIoT networks, we investigate the stochastic ADMM based RL solvers. In the following, we will firstly study the adaptive stochastic ADMM for solving (\ref{eq:p1}) and then apply the results to decentralized RL. 
The main contributions of this paper can be summarized as follows:
\begin{itemize}
    \item We propose a new adaptive stochastic incremental-ADMM (asI-ADMM) method for solving decentralized consensus optimization (\ref{eq:p1}), the updating order of which follows a predetermined order. We use the first-order approximation as well as proximal updates for primal variables to stabilize the convergence property. To further solve large deviation for stochastic objectives, we apply a weighted exponential moving average estimation of the true gradient and send this estimate as a token at each iteration. 
    \color{black}{\item We provide convergence properties for the asI-ADMM by designing a Lyapunov function. We prove that the asI-ADMM has $O(\frac{1}{k}) +O(\frac{1}{M})$ convergence rate, where $k$ denotes the iteration number and $M$ denotes the mini-batch sample size. We use numerical experiments for two regression problems, and the results show that the proposed algorithm requires fewer communication rounds and still achieves better optimum solution compared with state-of-the-art methods.
    \item We study two settings in decentralized RL: parallel and cooperative, and we formulate them into a consensus optimization problem. We adopt the asI-ADMM to decentralized RL and prove that asI-ADMM in RL can also achieve $O(\frac{1}{k}) + O(\frac{1}{M})$ by REINFORCE estimator. } 
    \item We conduct two empirical experiments in edge IIoT network setting: UAV localization and computing resource management with homogeneous and heterogeneous configurations, in which the latter has different initial states distribution and scaled reward functions across the agents. We show that the proposed asI-ADMM based algorithm in decentralized RL outperforms the benchmarks in terms of communication costs and is also well adaptive in complex setups.
\end{itemize}  

The rest of the paper is organized as follows. Section. \ref{sec:admm} presents the problem formulation as well as the description of the asI-ADMM algorithm. 
Convergence analysis is given in Section \ref{sec:admm}. Section \ref{sec:DRL} describes decentralized RL, and presents asI-ADMM based method for solving decentralized RL and its convergence analysis. Numerical results and conclusion are given in Sections \ref{sec:results} and \ref{sec:conclusion}, respectively.


Throughout the paper, we adopt the following notation: $\mathbb E [\cdot]$ denotes the expectation with respect to a set of variable $\bm{\zeta}_i^k = \{\zeta_{i,m}^k\}_M$. $\vert \cdot \vert$ is the absolute value. $\Vert \cdot \Vert^2$ denotes the Euclidean norm. $\nabla f(\cdot)$ denotes the gradient of a function $f(\cdot)$. $\langle\cdot\rangle$ denotes the inner product in a finite dimensional Euclidean space. $\vectheta^*\in\mc X$ denotes the optimal solution to (\ref{eq:p1}), where $\mc X \subset\mathbb R^m$ is the domain. Besides, we define $D_{\mc X} \triangleq \sup_{\vectheta_a,\vectheta_b\in\mc X}\norm{\vectheta_a-\vectheta_b}^2$.

\section{Problem Formulation and ADMM Based Algorithm}\label{sec:admm}
In this section, we reformulate the problem in (\ref{eq:p1}) for decentralized edge computing in IIoT networks and propose asI-ADMM to solve the reformulated decentralized consensus problem. 
\subsection{The Learning Problem with ADMM}

Consider a system with $N$ agents operating in a common environment. We are interested in decentralized settings without a central controller. Denote the decentralized network $\mc G:= (\mc N, \mc E)$, where $\mc N:= \{1,...,N\}$ represents the set of agents and $\mc E$ represents the set of communication links. 
Define $\bigtheta = [\vectheta_1, ..., \vectheta_N] \in \mathbb R^{mN}$, where $\vectheta_i$ is the parameter at the $i$-th agent, and the loss function $f_i$ captures the error of the model on local learning data. Thus, we can reformulate problem \eqref{eq:p1} as 
\begin{equation}\label{eq:admmP2}
    \min_{\bigtheta,z} \sum_{i\in\mc N} f_i(\vectheta_i;\mc D_i),~\text{s.t. } \mathbbm 1\otimes z-\bigtheta = \bm{0}, 
\end{equation}
where $z\in\mathcal{X}$, and $\mathbbm{1} = [1,...,1] \in \mathbb R^N$, and $\otimes$ is the Kronecker product. Here $\mc D_i$ is the local dataset at the $i$-th agent, which is collected from sensors such as drones or actuators. The augmented Lagrangian for problem (\ref{eq:admmP2}) is
\begin{equation}\label{eq:lagaragian}
    \mathcal{L}_{\rho}(\bm{\Theta},\bm{\lambda},z) = \sum_{i\in\mc N}f_i(\vectheta_i; \mc D_i) + \langle \bm{\lambda}, \mathbbm 1\otimes z - \bigtheta \rangle + \frac{\rho}{2} \Vert \mathbbm 1\otimes z - \bigtheta\Vert^2,
\end{equation}
where $\veclambda = [\lambda_1,...,\lambda_N] \in\mathbb R^{mN}$ is the dual variable, while $\rho>0$ is a constant parameter. Following I-ADMM \cite{ye2020privacy}, with guaranteeing $\sum_{i\in \mc N} (\bm{\theta}_i^0-\frac{\lambda_i^0}{\rho})=\bm{0}$, the updates of $\bm{\Theta}$, $\bm{\lambda}$ and $z$ at the $(k+1)$-th iteration are given by

  \begin{subequations}
	\begin{align}		
	&\bm{\theta}_i^{k+1}:= \left\{\begin{aligned}
	&\arg \min_{\bm \theta_i}  \mathcal{L}_{\rho} (\bm{\theta}_i,\bm{\lambda}^k,z^k ),~ i=i_k ;~~~~~ \\
	&\bm{\theta}_i^{k },~i\neq i_k;
	\end{aligned}   \right. \label{eq4a}\\
	&\lambda_i^{k+1}:= \left\{\begin{aligned}
	& \lambda_i^{k} +  \rho \gamma  ( z ^{k}-\bm{\theta}_{i}^{k +1}  ),~i=i_k ;\\
	&\lambda_i^{k },~i\neq i_k ;\label{eq4b}
	\end{aligned}   \right. \\
	&  z^{k+1} := z^{k} + \frac{1}{N} \left[ \left(\bm{\theta}_{i_k}^{k+1}- \frac{\lambda_{i_k}^{k+1}}{\rho}  \right) - \left(\bm{\theta}_{i_k}^{k }- \frac{\lambda_{i_k}^{k }}{\rho}\right)  \right].\label{eq4c}  
	\end{align}	
\end{subequations}

 In the stochastic update of (\ref{eq4a}), a mini-batch samples $\bm{\zeta}_i^k$ are drawn from dataset $\mathcal{D}_i$. To be more specific, $\bm{\zeta}_i^k\triangleq\{\zeta_{i,m}^k\}_{m=1}^M$ is the set of the randomly selected mini-batch samples and $M$ is the size of the mini-batch. To reduce the computation load, the stochastic first-order approximation $f_{i}(\bm{\theta}_{i};\mc D_{i}) \approx \langle  G_{i}(\vectheta_{i}^k;\bm{\zeta}_{i}^k),\bm{\theta}_{i}-\bm{\theta}_{i }^k\rangle$ will be adopted, where $ G_i(\vectheta_i^k;\bm{\zeta}_i^k)$ is the mini-batch stochastic gradient given by 
\begin{equation}
 G_i(\vectheta_i^k;\bm{\zeta}_i^k) = \frac{1}{M}\sum_{m=1}^M {\nabla} f_i(\bm{\theta};\zeta_{i,m}^k).   
\end{equation}
Though gradient $G_i(\vectheta_i^k;\bm{\zeta}_i^k)$ can be calculated with a sampling method, other sources of noise from stochastic objectives can cause large deviation in the result \cite{yuan2020stochastic}. Hence similar to Adam \cite{adam}, we apply the first-moment estimation of gradients as 
\begin{equation}
		    \mu^{k+1}:= \eta^k \mu^{k} + (1-\eta^k) G_{i_k }(\vectheta_{i_k }^k;\bm{\zeta}_{i_k}^k).\label{eq:mu_update}
\end{equation}
Here $\mu^{k+1}$ is the weighted exponential moving average (EMA) estimation of the true gradient, i.e., the first moment (the mean) of $\nabla f_i(\vectheta_i; \mc D_i)$ with a hyper-parameter $\eta^k\in[0,1)$ controlling the exponential decay rate of the moving averages. The constant $\eta^k$ can be set adaptively to determine the decay rate of past gradients. When $\eta^k\to0$, the algorithm will effectively eliminate the moving memory
\cite{strang2019linear}. Since $\mu^{k+1}$ is an estimation for the gradient $\nabla f_{i_k}(\bm\theta_{i_k}^k)$, from (\ref{eq:mu_update}), the upper bound of estimation variance can be measured by 
\begin{equation}
    \begin{aligned}
       \mathbbm{E}[ \|\mu^{k+1} -\nabla f_{i_k}(\bm\theta_{i_k}^k) \|^2] \leq&2\mathbbm{E}[\|  \nabla f_{i_k}(\bm\theta_{i_k}^k) - G_{i_k }(\vectheta_{i_k }^k;\bm\zeta_{i_k}^k)\|^2]     \\& 
        +2(\eta^k)^2\mathbbm{E}[ \|\mu^k-G_{i_k}(\bm\theta_{i_k}^k;\bm\zeta_{i_k}^k)  \|^2] .   
        \end{aligned}\label{eq:muv}
\end{equation}
The first term of (\ref{eq:muv}) is the variance of stochastic gradient, the upper bound of which is normally relevant to the mini-batch size. While the second term is the variance of stochastic gradient from the former EMA. To bound the variance of EMA, we need to construct an efficient bound for the second term. Hence, we propose to apply the following adaptive rule for the selection of $\eta^k$ as
\begin{align}
    \eta^k=\left\{
    \begin{aligned}
        &\bar{\eta},~(\bar{\eta})^2\|\mu^k-G_{i_k}(\bm\theta_{i_k}^k;\bm\zeta_{i_k}^k)  \|^2\leq \frac{\iota^2}{M};   \\
        &\sqrt{\frac{\iota^2}{M}}\frac{1}{\|\mu^k-G_{i_k}(\bm\theta_{i_k}^k;\bm\zeta_{i_k}^k)  \|},~\text{otherwise},
    \end{aligned}
    \right.\label{eq:eta}
\end{align}
where $\bar{\eta}\in[0,1)$ and $\iota\in\mathbbm{R}$ are pre-determined hyper parameters. From (\ref{eq:eta}), it is guaranteed that the second term of (\ref{eq:muv}) is bounded by a constant $\frac{\iota^2}{ M}$. Besides, the adaptive rule (\ref{eq:eta}) indicates that $\eta^k \leqslant \bar{\eta}$, which controls the impact of past gradients $\mu^k$ according to the mini-batch size $M$. The recent work in \cite{chen2020cada} applies a different control rule for adaptive stochastic gradients that can be implemented to save communication loads.

To further stabilize the convergence behavior for the inexact augmented Lagrangian method, we introduce proximal update for primal variables. The update of $\bm\Theta^{k+1}$ can be rewritten as
\begin{subequations}
	\begin{align}	\label{eq:opti_cond}	
	&\bm{\theta}_i^{k+1}:= \left\{\begin{aligned}
	&\arg \min_{\bm \theta_i} 
	\hat{\mathcal{L}}_{\rho} (\bm{\theta}_i,\bm{\lambda}^k,z^k ) ,~ i=i_k ;~~~~~ \\
	&\bm{\theta}_i^{k },~i\neq i_k,
	\end{aligned}   \right. 
	\end{align}	
\end{subequations}
\begin{algorithm}[t]

	\caption{Adaptive stochastic I-ADMM (asI-ADMM) } 
	\begin{algorithmic}[1]
    \STATE \textbf{initialize}: $\{\bm{\theta}_i^0 =  \lambda_i^0=z^0=\mu^0=\bm{0}, k=0,\bar{\eta}, \iota |i\in\mathcal{N}\}$; 
		\FOR{$k=0,1,...$}
		\STATE agent $i_k = k \mod N + 1$ do: 		 
		\STATE \textbf{receive} token $\mu^k,z^{k }$;   
		\STATE uniformly and randomly pick $M$ samples $\bm \zeta_{i_k}^k\subseteq\mc D_i$, and compute stochastic gradient $G_{i_k}(\vectheta_{i_k}^k;\bm \zeta_{i_k}^k)$ ;
		\STATE \textbf{choose} $\eta^{k}$ according to (\ref{eq:eta});
		\STATE \textbf{update} $\mu^{k+1}$ according to (\ref{eq:mu_update});
		\STATE \textbf{update} $\bm{\theta}_{i_k}^{k+1}$ by (\ref{eq:primal_first_order});
		\STATE \textbf{update} $\lambda_{i_k}^{k+1}$ according to (\ref{eq4b});
		\STATE \textbf{update} $z^{k+1}$ according to (\ref{eq4c});
		\STATE \textbf{send} $\mu^{k+1}$ and token $z^{k+1} $ to agent $i_{k+1}= (k+1) \mod N + 1$;  
		\ENDFOR 		
	\end{algorithmic} \label{alg:aspiADMM}
\end{algorithm} 

\begin{figure} [t]
\centering
\includegraphics[width=80mm]{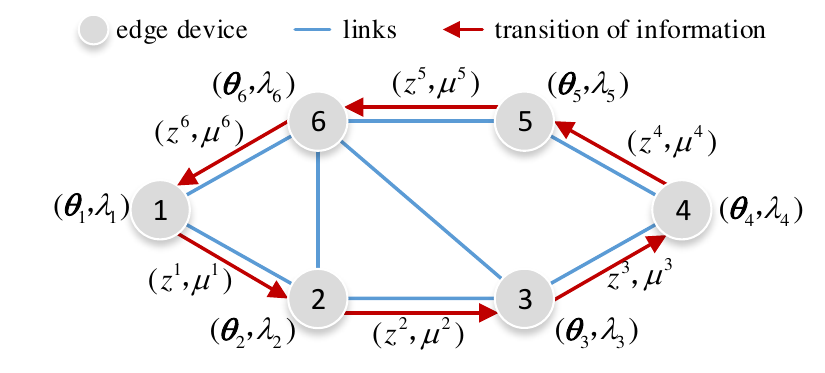}
\caption{asI-ADMM multi-agent communications in decentralized setup.}
\label{fig:system model}
\vskip -0.1 in
\end{figure}
\noindent
where the approximated Lagrangian function is give by
\begin{equation}
\begin{aligned} 
    &\hat{\mathcal{L}}_{\rho} (\bm{\theta}_i,\bm{\lambda}^k,z^k )= \langle\mu^{k+1},\bm{\theta}_i-\bm{\theta}_i^k \rangle\\
    &~~~~~~~~~~~~~~~~+ \frac {\rho }{2}  \|  z ^k-\bm{\theta}_i +\frac{\lambda_i^k}{\rho}  \big\|^{2}+\frac{\tau }{2} \big\| \bm{\theta}_i - \bm{\theta}_i^k \|^2.
    \end{aligned}\label{eq:primal_first_order}
\end{equation}

  The asI-ADMM is formally presented in Algorithm \ref{alg:aspiADMM}, where the agents are activated in a predetermined circulant pattern. Particularly, we assume the update order $\langle i_k \rangle_{k\geq0}$ repeats in a Hamiltonian circle: $1\to2\to3\cdots\to N\to1\cdots$, as shown in Fig. \ref{fig:system model}. The global variable $z^{k+1}$ and gradient estimation $\mu^{k+1}$ get updated at agent $i_k$ and passed as tokens to its neighbour $i_{k+1}$ through Hamiltonian cycle. When $\{\eta^k = 0|k=1,2,...\}$, the algorithm reduces to the vanilla stochastic incremental ADMM (sI-ADMM) as in \cite{chen2020coded}. Comparing with sI-ADMM, asI-ADMM constructs stochastic gradient $\mu^{k+1}$ based on the information $G_{i_k}(\vectheta_{i_k}^k;\bm{\zeta}_{i_k}^k)$ and $\mu^{k}$, while sI-ADMM only considers the current mini-batch gradient. Using more local information, asI-ADMM can lead to more accurate estimation of the full gradient. Moreover, it does not require additional storage and computation resources.

\begin{remark}
The algorithm asI-ADMM can also be extended to an online version when data samples $\bm\zeta_{i}^k$ are generated in real-time or drawn from a distribution $\mathcal{P}_i$ of data source in the environment at step 5 instead of a fixed dataset $\mc D_i$.  
\end{remark}
We will describe more details in Section \ref{sec:DRL} for applying asI-ADMM for decentralized RL that implements in an online manner. 

\subsection{Convergence Analysis}
In what follows, we will analyze the convergence of Algorithm \ref{alg:aspiADMM}. To facilitate the analysis, we first introduce definitions and assumptions as follows. 
\begin{assumption} The graph $\mathcal G$ is connected, and there exists a Hamiltonian cycle.
\end{assumption}
According to \cite{findingHamilton}, Hamiltonian cycle can be found in polynomial time. If the Hamiltonian cycle does not exist, we can instead consider a shortest path in the connected graph $\mathcal{G}$ as in \cite{ye2020privacy}, the analysis of which however is not included here due to space limitation.

\begin{assumption}\label{asup:L-smooth} The local loss function $f_i(\vectheta)$ in (\ref{eq:admmP2}) is lower bounded over $\vectheta\in\mathcal{X}$, and $f_i(\vectheta)$ is coercive over $\vectheta$. Each local function $f_{i}(\vectheta)$ is \textit{L-Lipschitz differentiable}, i.e., for any $\vectheta_1,\vectheta_2 \in\mathcal{X}$,
    \begin{equation}
    \begin{aligned}
     \|\nabla f_i(\bm \theta_1)-\nabla f_i(\bm \theta_2)   \| \leq L \| \bm \theta_1 - \bm \theta_2  \| ,~\forall i\in\mathcal N. 
    \end{aligned}
    \end{equation} 
    \end{assumption}
    With Assumption \ref{asup:L-smooth}, we obtain the following useful inequality for loss function $f_i(\bm \vectheta)$,
\begin{equation}
    f_i(\vectheta_1) \leqslant f_i(\vectheta_2) + \langle \nabla f_i(\vectheta_2), \vectheta_1 - \vectheta_2 \rangle + \frac{L}{2}\norm{\vectheta_1 - \vectheta_2}^2, \label{eq:Lsmooth2}
\end{equation}
for any $\vectheta_1, \vectheta_2 \in \mathcal{X}$.

Then we give some standard assumptions regarding stochastic optimization problem in (\ref{eq:admmP2}).
\begin{assumption}(\textit{Unbiased estimation})
    For smooth function $f_i(\vectheta)$, there exists a stochastic first-order oracle that returns a noisy estimation to the gradient of $f_i(\bm{\theta})$, and the unbiased estimation satisfies
    \begin{equation}
        \mathbb E[\nabla f_i(\vectheta;\zeta_{i,m})] = \nabla f_i(\vectheta).
    \end{equation}
    Then we also have for mini-batch stochastic estimation that 
    \begin{equation}
        \mathbb E[ G_i(\vectheta;\bm{\zeta}_i)] = \nabla f_i(\vectheta).
    \end{equation}
\end{assumption}

 \begin{assumption}(\textit{Bounded gradient}) \label{assump2} Stochastic gradient of loss function $f_i(\vectheta)$ is bounded. That is, there exists a constant $\delta>0$ such that 
     \begin{equation}
         \max_{1\leq i \leq N} \sup_{\theta\in{\Theta}} \norm{\nabla f_i(\vectheta;\zeta_{i,m})}^2 \leqslant \delta^2.
     \end{equation}
     The variance of stochastic gradient $\nabla f_i(\vectheta;\zeta_{i,m}) $ is bounded by $\sigma^2$ for all $\vectheta$, 
     and then we also have 
     \begin{equation}
         \mathbb E [\norm{  G_i(\vectheta;\bm{\zeta}_{i}) - \nabla f_i(\vectheta)}^2] \leqslant \frac{\sigma^2}{ M}, ~  \forall \vectheta \in\mathcal{X}.
     \end{equation}
     
\end{assumption}
Given Assumption 4, we can derive the following bound for the EMA estimation. 
\begin{lemma}\label{lemma:muBound} The upper bound of estimation variance for weighted EMA $\mu^k$ is given by
 \begin{equation}
\mathbbm{E} [\norm{\mu^{k+1}-\nabla f_{i_{k}}(\vectheta_{i_{k}}^{k})}^2 ] \leqslant \frac{2}{M}(\iota^2+\sigma^2).
 \end{equation}
 \end{lemma}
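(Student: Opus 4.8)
The plan is to take the variance decomposition (\ref{eq:muv}) as the starting point and bound its two terms separately. To justify (\ref{eq:muv}) itself, I would expand the EMA update (\ref{eq:mu_update}) as
\[
\mu^{k+1}-\nabla f_{i_k}(\vectheta_{i_k}^k) = \eta^k\big(\mu^k - G_{i_k}(\vectheta_{i_k}^k;\bm\zeta_{i_k}^k)\big) + \big(G_{i_k}(\vectheta_{i_k}^k;\bm\zeta_{i_k}^k) - \nabla f_{i_k}(\vectheta_{i_k}^k)\big),
\]
apply the elementary bound $\|a+b\|^2\le 2\|a\|^2+2\|b\|^2$, and take expectations; this reproduces exactly (\ref{eq:muv}). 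It then remains to show that the mini-batch stochastic-gradient term is at most $2\sigma^2/M$ and that the EMA-memory term is at most $2\iota^2/M$.

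The first bound is immediate from the variance statement in Assumption \ref{assump2}, which gives $\mathbb{E}[\|G_{i_k}(\vectheta_{i_k}^k;\bm\zeta_{i_k}^k) - \nabla f_{i_k}(\vectheta_{i_k}^k)\|^2]\le\sigma^2/M$, so this term contributes at most $2\sigma^2/M$. For the second bound, the key observation is that the adaptive rule (\ref{eq:eta}) is designed precisely so that $(\eta^k)^2\|\mu^k-G_{i_k}(\vectheta_{i_k}^k;\bm\zeta_{i_k}^k)\|^2\le \iota^2/M$ holds pointwise (for every realization of $\bm\zeta_{i_k}^k$): on the first branch $\eta^k=\bar\eta$ is selected only when $(\bar\eta)^2\|\mu^k-G_{i_k}\|^2\le\iota^2/M$ by the definition of the branch, and on the second branch the choice $\eta^k=\sqrt{\iota^2/M}\,/\,\|\mu^k-G_{i_k}\|$ makes this product identically equal to $\iota^2/M$. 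Since this is a pointwise (almost-sure) inequality, taking expectations preserves it, so the EMA term in (\ref{eq:muv}) contributes at most $2\iota^2/M$. Adding the two contributions yields $\mathbb{E}[\|\mu^{k+1}-\nabla f_{i_k}(\vectheta_{i_k}^k)\|^2]\le\frac{2}{M}(\iota^2+\sigma^2)$, as claimed.

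The only point that genuinely requires care — and the main obstacle — is that $\eta^k$ is itself a random variable depending on the same mini-batch $\bm\zeta_{i_k}^k$ as $G_{i_k}$, so it cannot be treated as a deterministic scalar pulled outside the expectation in the second term of (\ref{eq:muv}). The clean way around this is to keep the whole product $(\eta^k)^2\|\mu^k-G_{i_k}(\vectheta_{i_k}^k;\bm\zeta_{i_k}^k)\|^2$ together, invoke the almost-sure bound $\le\iota^2/M$ coming from (\ref{eq:eta}), and only then take expectation; one never needs to control $\mathbb{E}[\|\mu^k-G_{i_k}\|^2]$ in isolation. (If one prefers to read (\ref{eq:muv}) literally with $\eta^k$ outside the expectation, the same argument still works after conditioning on the history up to iteration $k$, since $\mu^k$ is measurable with respect to that history, and then removing the conditioning by the tower rule.)
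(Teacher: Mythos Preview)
Your proposal is correct and follows essentially the same route as the paper: start from the decomposition (\ref{eq:muv}), bound the stochastic-gradient term by Assumption~\ref{assump2}, and bound the EMA-memory term by the pointwise control built into the adaptive rule (\ref{eq:eta}). Your treatment is in fact more careful than the paper's, since you explicitly justify (\ref{eq:muv}) from the update (\ref{eq:mu_update}) and address the measurability subtlety of keeping $(\eta^k)^2\|\mu^k-G_{i_k}\|^2$ together before taking expectation---a point the paper leaves implicit.
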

 \begin{IEEEproof}
 Applying the adaptive control rule in (\ref{eq:eta}), we have $\mathbbm{E}[(\eta^k)^2 \|\mu^k-G_{i_k}(\bm\theta_{i_k}^k;\bm\zeta_{i_k}^k)  \|^2] \leq \frac{\iota^2}{M}$, based on which the final result is obtained from (\ref{eq:muv}).
 \end{IEEEproof}
\begin{remark}
With applying the adaptive rule in (\ref{eq:eta}), the estimation variance $\mathbbm{E}[\norm{\mu^{k+1}-\nabla f_{i_{k}}(\vectheta_{i_{k}}^{k})}^2] $ is determined by the mini-batch size $M$. As $M$ increases, $\eta^k$ gets smaller where $G_i(\vectheta_i^k;\zeta_i^k)$ dominants the weighted EMA estimation $\mu^{k+1}$. This is reasonable since the stochastic gradient $G_i(\vectheta_i;\zeta_i)$ is more close to the true gradient $\nabla f_i(\vectheta_i)$ with more samples. As $M\to \infty$, it gives $\eta^k=0$ from adaptive rule (\ref{eq:eta}), and the weighted EMA is derived as $\mu^{k+1}=G_i(\vectheta_i^k;\bm{\zeta}_i^k)=\nabla f_i(\vectheta_i)$. 
\end{remark}
A common way for proving convergence of a method or an algorithm is by finding Lyapunov functions \cite{lyapunoc}.  A Lyapunov function can be interpreted as "energy" that decreases geometrically with each iteration of the algorithm, with an energy of zero corresponding to reaching the optimal solution of the problem. To prove the convergence of Algorithm 1, we first define a Lyapunov function as
\begin{equation}
   \mathcal{V}^k \triangleq \mathbbm{E}\big[\mathcal L_{\rho}(\bm \Theta^k,\veclambda^k, z^k)\big],~k\in \mathbbm{N}^+.
\end{equation}

The property for this Lyapunov function is summarized as follows.
\begin{lemma} \label{lem:uppB}
For any $k\in\mathbbm{N}^+$, Lyapunov function $\mc V^k$ is lower bounded as below 
\begin{equation}
 	    \mathcal{V}^k \geqslant F^* -\frac{2N(1-\bar{\eta})\delta^2}{\rho(1-{(\bar{\eta}})^k)} -  \left[\frac{2\tau^2}{\rho}-\frac{3\rho}{8}-\rho(\gamma-1)^2 \right]ND_{\mathcal X},
 	\end{equation}
where $F^*= \sum_{i=1}^N f_i(\vectheta^*) $.
\end{lemma}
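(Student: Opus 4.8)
The plan is to lower-bound $\mathcal{V}^k = \mathbb{E}[\mathcal{L}_\rho(\bm\Theta^k,\veclambda^k,z^k)]$ by expanding the augmented Lagrangian from \eqref{eq:lagaragian} and controlling each of its three pieces. First I would rewrite $\mathcal{L}_\rho(\bm\Theta^k,\veclambda^k,z^k) = \sum_{i\in\mc N} f_i(\vectheta_i^k;\mc D_i) + \langle\veclambda^k,\mathbbm 1\otimes z^k - \bm\Theta^k\rangle + \frac{\rho}{2}\|\mathbbm 1\otimes z^k - \bm\Theta^k\|^2$, complete the square on the last two terms to get $\frac{\rho}{2}\|\mathbbm 1\otimes z^k - \bm\Theta^k + \veclambda^k/\rho\|^2 - \frac{1}{2\rho}\|\veclambda^k\|^2$, and then use the optimality condition of the $\vectheta$-update \eqref{eq:opti_cond}--\eqref{eq:primal_first_order} together with the $\lambda$-update \eqref{eq4b} to express the stationarity residual. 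From \eqref{eq:primal_first_order} the first-order condition at $i=i_k$ gives $\mu^{k+1} - \rho(z^k - \vectheta_{i_k}^{k+1} + \lambda_{i_k}^k/\rho) + \tau(\vectheta_{i_k}^{k+1}-\vectheta_{i_k}^k) = 0$; combined with \eqref{eq4b} this yields an expression for $\lambda_{i_k}^{k+1}$ in terms of $\mu^{k+1}$ and the proximal/primal deviation, so $\|\veclambda^k\|^2$ can be bounded by $\|\mu^k\|^2$-type quantities plus $\|\vectheta_{i_k}^{k}-\vectheta_{i_k}^{k-1}\|^2$ terms.

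Next I would bound the three resulting contributions. For $\sum_i f_i(\vectheta_i^k) \geq F^*$ I would use that each $f_i$ is lower bounded and, more precisely, that $\vectheta^*$ minimizes $\sum_i f_i$; here I need $\sum_i f_i(\vectheta_i^k) \geq \sum_i f_i(\vectheta^*) = F^*$, which requires care since the $\vectheta_i^k$ are not all equal to a common point, so I would instead invoke convexity is not assumed — rather I expect the bound to come from the coercivity/lower-boundedness in Assumption~\ref{asup:L-smooth} combined with feasibility-gap control, or simply bounding $f_i(\vectheta_i^k)\geq f_i(\vectheta^*) - $ (perturbation), absorbing the perturbation into the $D_{\mc X}$ term via $L$-smoothness and \eqref{eq:Lsmooth2}. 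The dual term $-\frac{1}{2\rho}\|\veclambda^k\|^2$ I would bound using $\|\lambda_i^k\|\leq$ (accumulated gradient norms), invoking the bounded-gradient Assumption~\ref{assump2} ($\|\nabla f_i(\cdot;\zeta)\|^2\leq\delta^2$) and Lemma~\ref{lemma:muBound} to get $\mathbb{E}\|\mu^k\|^2 \lesssim \delta^2$; the circulant update pattern means only one agent's dual is updated per step, and the geometric factor $1-(\bar\eta)^k$ appearing in the bound strongly suggests that the EMA recursion $\mu^{k+1}=\eta^k\mu^k+(1-\eta^k)G_{i_k}$ unrolled with $\mu^0=\bm 0$ contributes a $\frac{(1-\bar\eta)}{1-(\bar\eta)^k}$ normalization when summing the geometric series of weights. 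The squared-feasibility/proximal terms $\|\mathbbm 1\otimes z^k - \bm\Theta^k + \veclambda^k/\rho\|^2$ and $\|\vectheta_{i_k}^{k+1}-\vectheta_{i_k}^k\|^2$ I would bound below by $0$ or above by $D_{\mc X}$ using the definition $D_{\mc X}=\sup_{\vectheta_a,\vectheta_b\in\mc X}\|\vectheta_a-\vectheta_b\|^2$, which explains the $ND_{\mc X}$ factor and the coefficients $\frac{2\tau^2}{\rho}$, $\frac{3\rho}{8}$, $\rho(\gamma-1)^2$.

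Concretely the ordered steps are: (i) complete the square in $\mathcal{L}_\rho$; (ii) substitute the $\vectheta$-optimality condition and $\lambda$-update to rewrite $\lambda_{i_k}^{k+1}$ and hence $\|\veclambda^k\|^2$ in terms of $\mu$'s and successive primal differences; (iii) unroll the EMA recursion with zero initialization to bound $\mathbb{E}\|\mu^k\|^2$ and $\mathbb{E}\|\sum \text{weights}\cdot G\|^2$ by $\delta^2$ with the geometric normalization $\frac{1-\bar\eta}{1-(\bar\eta)^k}$, giving the $-\frac{2N(1-\bar\eta)\delta^2}{\rho(1-(\bar\eta)^k)}$ term; (iv) bound all primal-difference and feasibility norms by $D_{\mc X}$ and collect the $-[\frac{2\tau^2}{\rho}-\frac{3\rho}{8}-\rho(\gamma-1)^2]ND_{\mc X}$ term; (v) combine with $\sum_i f_i(\vectheta_i^k)\geq F^*$ (modulo the smoothness perturbation absorbed into the $D_{\mc X}$ term) to conclude. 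The main obstacle I anticipate is step (ii)--(iii): carefully tracking how the per-iteration single-agent (incremental) updates accumulate across a full Hamiltonian cycle so that the stale dual variables $\lambda_i^k$ for $i\neq i_k$ are correctly expressed, and getting the geometric-series bookkeeping for the EMA weights exactly right to produce the $1-(\bar\eta)^k$ denominator rather than a looser constant — this is where the interplay between the adaptive rule \eqref{eq:eta}, the bounded-gradient assumption, and the incremental structure all have to be combined simultaneously.
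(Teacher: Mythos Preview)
Your proposal has the right ingredients and is close to what the paper does, but the decomposition differs in one respect. The paper does \emph{not} complete the square to isolate $-\tfrac{1}{2\rho}\|\veclambda^k\|^2$. Instead it keeps the inner product $\langle\lambda_i^{k+1},z^{k+1}-\vectheta_i^{k+1}\rangle$ intact and substitutes into it: first the dual update \eqref{eq4b}, $\lambda_{i_{k-j}}^{k-j+1}=\lambda_{i_{k-j}}^{k-j}+\rho\gamma(z^{k-j}-\vectheta_{i_{k-j}}^{k-j+1})$, and then the optimality condition of \eqref{eq:primal_first_order}, which gives $\lambda_{i_{k-j}}^{k-j}=\mu^{k-j+1}-\rho(z^{k-j}-\vectheta_{i_{k-j}}^{k-j+1})+\tau(\vectheta_{i_{k-j}}^{k-j+1}-\vectheta_{i_{k-j}}^{k-j})$. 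This turns the inner product into $\langle \mu + \tau(\vectheta^{k-j+1}-\vectheta^{k-j}) + \rho(\gamma-1)(z^{k-j}-\vectheta^{k-j+1}),\, z^{k+1}-\vectheta^{k-j+1}\rangle$, which is then bounded below termwise via Young's inequality; the quadratic $\tfrac{\rho}{2}\|z^{k+1}-\vectheta_i^{k+1}\|^2$ is kept alongside and partially absorbs those Young contributions, which is precisely where the coefficients $\tfrac{2\tau^2}{\rho}$, $\tfrac{3\rho}{8}$, and $\rho(\gamma-1)^2$ arise. Your square-completion route should also go through, but it would yield different constants.

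Two smaller points. First, the incremental-cycle bookkeeping you flag as the main obstacle is handled exactly as you anticipate: the paper writes $\mathcal{L}_\rho(\bm\Theta^{k+1},\veclambda^{k+1},z^{k+1})$ as a sum over $j=0,\ldots,N-1$, using that agent $i_{k-j}$ was last touched at step $k-j$, so $\vectheta_{i_{k-j}}^{k+1}=\vectheta_{i_{k-j}}^{k-j+1}$ and $\lambda_{i_{k-j}}^{k+1}=\lambda_{i_{k-j}}^{k-j+1}$. Second, the $\delta^2$ term comes from a direct bound on $\|\mu^{k+1}\|^2$ obtained by unrolling the EMA with $\mu^0=\bm 0$ and using Assumption~\ref{assump2}; Lemma~\ref{lemma:muBound} (which controls $\|\mu-\nabla f\|^2$, not $\|\mu\|^2$) is not the right tool here, though you do describe the correct unrolling in your step~(iii). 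The paper also simply asserts $\sum_j f_{i_{k-j}}(\vectheta_{i_{k-j}}^{k-j+1})\geq F^*$ in its step~(a), so your hesitation on that point is well placed but shared with the original argument.
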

\begin{IEEEproof}
See Appendix A the Proof of Lemma \ref{lem:uppB}. 
\end{IEEEproof}
Based on Lemmas 1 and 2, we finally present the convergence results for asI-ADMM.
\begin{lemma} \label{lem:converg}
Suppose sequence $\{\bm{\Theta}^k,\veclambda^k, z^k|k=1,...,K\} $ is generated from Algorithm \ref{alg:aspiADMM}, and
let $\tau \geqslant \frac{L +\rho-1}{2}$, $\rho \geqslant 1$ and $\gamma > 4N$, then we have
\begin{equation}
      \frac{1}{K}\sum_{k=0}^K (\norm{\vectheta_{i_k}^k-\vectheta_{i_k}^{k-1}}^2 + \norm{z^k-z^{k-1}}^2 )  \leqslant \frac{\mathcal{V}^0 - \mathcal{V}^* }{K\kappa}+ \frac{\iota^2+\sigma^2}{M\kappa},
 	\end{equation}
where $\kappa=\text{min}(\chi,\varphi)$ with $\chi   = \frac{\rho-2L+2\tau+1}{2} - \frac{2\rho}{\gamma}$ and $\varphi = \frac{N\rho}{2}- \frac{2N\rho^2}{\gamma}$.
\end{lemma}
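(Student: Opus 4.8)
The plan is to establish a one-step \emph{sufficient-decrease} inequality for the Lyapunov function $\mc V^k=\mathbbm E[\mc L_\rho(\bigtheta^k,\veclambda^k,z^k)]$ and then telescope. Concretely, I would prove that for every iteration $k$,
\[
\mc V^{k+1}\;\le\;\mc V^k-\chi\,\mathbbm E\!\left[\norm{\vectheta_{i_k}^{k+1}-\vectheta_{i_k}^{k}}^2\right]-\varphi\,\mathbbm E\!\left[\norm{z^{k+1}-z^{k}}^2\right]+\frac{c\,(\iota^2+\sigma^2)}{M},
\]
with $\chi,\varphi$ exactly as in the statement and $c$ an absolute constant. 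Summing this bound over $k$ makes the left-hand side telescope to $\mc V^0-\mc V^{K+1}$; using $\mc V^{K+1}\ge\mc V^\ast$, which holds by Lemma \ref{lem:uppB}, dividing by $K\kappa$ with $\kappa=\min(\chi,\varphi)$, and shifting the summation index gives precisely the claimed estimate. The hypotheses $\tau\ge\frac{L+\rho-1}{2}$, $\rho\ge1$ and $\gamma>4N$ enter only at this last stage, to guarantee $\chi>0$ and $\varphi>0$ (hence $\kappa>0$) so that the negative quadratic terms genuinely dominate.

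To derive the one-step inequality I would decompose the change of the augmented Lagrangian across the three block updates of Algorithm \ref{alg:aspiADMM}, writing $\mc L_\rho(\bigtheta^{k+1},\veclambda^{k+1},z^{k+1})-\mc L_\rho(\bigtheta^{k},\veclambda^{k},z^{k})$ as the sum of the increments due to the $\bigtheta$-update, the $\veclambda$-update and the $z$-update, and bounding each. \textbf{(i)} In the $\bigtheta$-step only agent $i_k$ moves; using the optimality condition of the proximal linearized subproblem \eqref{eq:primal_first_order}, whose objective $\hat{\mc L}_\rho$ is $(\tau+\rho)$-strongly convex in $\vectheta_{i_k}$, together with the descent inequality \eqref{eq:Lsmooth2} to pass from the linear model $\langle\mu^{k+1},\cdot\rangle$ back to $f_{i_k}$, one obtains a negative term of the form $-\tfrac12(\tau+\rho-L)\norm{\vectheta_{i_k}^{k+1}-\vectheta_{i_k}^{k}}^2$ plus a cross term $\langle\nabla f_{i_k}(\vectheta_{i_k}^{k})-\mu^{k+1},\vectheta_{i_k}^{k+1}-\vectheta_{i_k}^{k}\rangle$; splitting the cross term by Young's inequality and taking expectations, Lemma \ref{lemma:muBound} turns it into a further negative multiple of $\norm{\vectheta_{i_k}^{k+1}-\vectheta_{i_k}^{k}}^2$ and the additive $O(1/M)$ error. \textbf{(ii)} In the $z$-step, the incremental rule \eqref{eq4c} together with the initialization $\sum_i(\vectheta_i^0-\lambda_i^0/\rho)=\bm{0}$ guarantees $z^{k+1}=\arg\min_z\mc L_\rho(\bigtheta^{k+1},\veclambda^{k+1},z)$, and since $\mc L_\rho$ is $N\rho$-strongly convex in $z$ this contributes the clean decrease $-\tfrac{N\rho}{2}\norm{z^{k+1}-z^{k}}^2$. \textbf{(iii)} In the $\veclambda$-step, by \eqref{eq4b} one has $z^{k}-\vectheta_{i_k}^{k+1}=\tfrac1{\rho\gamma}(\lambda_{i_k}^{k+1}-\lambda_{i_k}^{k})$, so this increment equals $+\tfrac1{\rho\gamma}\norm{\lambda_{i_k}^{k+1}-\lambda_{i_k}^{k}}^2$, which is positive and must be absorbed.

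The main obstacle is controlling this positive dual term, together with the bookkeeping induced by the cyclic, single-agent-per-iteration structure. I would bound $\norm{\lambda_{i_k}^{k+1}-\lambda_{i_k}^{k}}^2$ by primal increments: subtracting the $\bigtheta$-optimality conditions at iteration $k$ and at the previous iteration where agent $i_k$ was active (which is $k-N$ under the Hamiltonian order $i_k=k\bmod N+1$), and invoking $L$-Lipschitz differentiability together with the EMA bound of Lemma \ref{lemma:muBound}, expresses $\lambda_{i_k}^{k+1}-\lambda_{i_k}^{k}$ in terms of $\vectheta_{i_k}^{k+1}-\vectheta_{i_k}^{k}$, the increment at that earlier iteration, the gap $z^{k}-z^{k-N}$, and a stochastic residual. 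Substituting back, choosing $\gamma>4N$ lets the negative terms from steps \textbf{(i)} and \textbf{(ii)} swallow the dual term up to the $-2\rho/\gamma$ and $-2N\rho^2/\gamma$ corrections appearing in $\chi$ and $\varphi$, while the cross-iteration pieces telescope once the one-step inequality is summed. Throughout I would take expectations conditionally on the history before $\bm\zeta_{i_k}^{k}$ is drawn, so that unbiasedness of the stochastic gradient removes the linear stochastic terms and the variance bound of Assumption \ref{assump2} together with Lemma \ref{lemma:muBound} controls the quadratic ones; this is what yields the additive $\tfrac{\iota^2+\sigma^2}{M\kappa}$ term after dividing by $\kappa$. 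The idle-agent contributions are handled exactly as in the proof of Lemma \ref{lem:uppB}, via $D_{\mc X}$, so no device beyond the decomposition above and the dual-to-primal estimate is needed.
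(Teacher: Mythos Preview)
Your overall strategy---establish a one-step descent inequality for $\mc V^k$ and telescope---and your three-block decomposition match the paper exactly. Steps \textbf{(i)} and \textbf{(ii)} are essentially the paper's argument (the paper in fact merges the $\bigtheta$- and $\veclambda$-increments into a single calculation, but the content is the same).

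The divergence is in step \textbf{(iii)}, how you control the positive dual term $\tfrac{1}{\rho\gamma}\norm{\lambda_{i_k}^{k+1}-\lambda_{i_k}^{k}}^2$. You propose to compare the $\vectheta$-optimality conditions at iteration $k$ and at the previous active iteration $k-N$ for agent $i_k$, then use $L$-smoothness and Lemma~\ref{lemma:muBound}. This detour introduces the difference $\mu^{k+1}-\mu^{k-N+1}$ (which Lemma~\ref{lemma:muBound} does not bound directly), the accumulated gap $z^{k}-z^{k-N}=\sum_{j=0}^{N-1}(z^{k-j}-z^{k-j-1})$, and cross-iteration terms whose clean telescoping you assert but do not verify; it is also unclear that this route would deliver the \emph{exact} constants $\chi,\varphi$ in the statement. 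The paper bypasses all of this with a one-line algebraic identity coming from the incremental $z$-update \eqref{eq4c} itself: rearranging \eqref{eq4c} gives
\[
\lambda_{i_k}^{k+1}-\lambda_{i_k}^{k}\;=\;\rho\big(\vectheta_{i_k}^{k+1}-\vectheta_{i_k}^{k}\big)-\rho N\big(z^{k+1}-z^{k}\big),
\]
whence $\norm{\lambda_{i_k}^{k+1}-\lambda_{i_k}^{k}}^2\le 2\rho^{2}\norm{\vectheta_{i_k}^{k+1}-\vectheta_{i_k}^{k}}^2+2\rho^{2}N^{2}\norm{z^{k+1}-z^{k}}^2$. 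Dividing by $\rho\gamma$ yields precisely the corrections $-\tfrac{2\rho}{\gamma}$ and $-\tfrac{2\rho N^{2}}{\gamma}$ in $\chi$ and $\varphi$, with no cross-iteration bookkeeping, no appeal to $L$-smoothness at this step, and no need for the $D_{\mc X}$ argument you mention at the end. Once you use this identity, your steps \textbf{(i)}--\textbf{(ii)} already give the one-step inequality with the stated constants, and the telescoping is exactly as you describe.
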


\begin{IEEEproof}
See Appendix B the Proof of Lemma \ref{lem:converg}. 
\end{IEEEproof}
Lemma \ref{lem:converg} indicates that sequence $\{\norm{\bigtheta^k-\bigtheta^{k-1}}^2 + \norm{z^k-z^{k-1}}^2|k=1,2,...\}$ converges with increasing iteration $k$. Besides, due to the incremental update (\ref{eq4c}), we can conclude that the all the sequences $\{\norm{\bigtheta^k-\bigtheta^{k-1}}^2,\norm{z^k-z^{k-1}}^2, \norm{\veclambda^k-\veclambda^{k-1}}^2|k=1,2,... \}$ are convergent. Next, based on Lemmas \ref{lem:uppB} and \ref{lem:converg}, we give the convergence properties of asI-ADMM regarding the derivatives.
\begin{theorem} \label{theorem1}
 Suppose sequence $\{\bm{\Theta}^k,\veclambda^k, z^k|k=1,...,K\} $ is generated from Algorithm \ref{alg:aspiADMM}, and following the conditions given in Lemma \ref{lem:converg}, we have
\begin{equation}\label{eq:theo1}
    \frac{1}{K}\sum_{k=0}^{K}\mathbbm E \big[\norm{\nabla \mc L(\bigtheta^k ,\veclambda^k ,z^k )}^2 \big] \leqslant \frac{\varepsilon(\mc V^0 - \mc V^*)}{K\kappa}+ \frac{\varepsilon(\iota^2+\sigma^2)}{M\kappa},
\end{equation}
where $\varepsilon =5 + 5L^2 + 5\tau^2 +15\rho^2+ 10\rho^2N^2$.
\end{theorem}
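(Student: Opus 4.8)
The plan is to convert the summable-increment estimate of Lemma~\ref{lem:converg} into a bound on the stationarity residual $\norm{\nabla\mathcal L(\bigtheta^k,\veclambda^k,z^k)}^2$ by expressing every block of $\nabla\mathcal L$, evaluated at the current iterate, as a fixed linear combination of the primal/dual/auxiliary increments plus the gradient-estimation error, and then squaring. I would first split $\nabla\mathcal L=(\nabla_{\bigtheta}\mathcal L,\nabla_{\veclambda}\mathcal L,\nabla_z\mathcal L)$ and treat the three blocks separately. The dual block is $\nabla_{\lambda_i}\mathcal L_\rho=z^k-\vectheta_i^k$, which by the dual update \eqref{eq4b} is proportional to the $\veclambda$-increment of the agent most recently active at index $i$; the $z$-block, by the incremental update \eqref{eq4c} together with the initialization $\sum_i(\vectheta_i^0-\lambda_i^0/\rho)=\bm0$, satisfies the invariant $z^k=\tfrac1N\sum_i(\vectheta_i^k-\lambda_i^k/\rho)$, so $\nabla_z\mathcal L_\rho(\bigtheta^k,\veclambda^k,z^k)=\rho[Nz^k-\sum_i(\vectheta_i^k-\lambda_i^k/\rho)]=\bm0$ (equivalently one keeps it and bounds it by a $z$-increment, which only affects constants).

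The main work is the primal block. For the active index, the first-order optimality condition of the surrogate subproblem \eqref{eq:primal_first_order} is $\mu^{k+1}-\rho(z^k-\vectheta_{i_k}^{k+1})-\lambda_{i_k}^k+\tau(\vectheta_{i_k}^{k+1}-\vectheta_{i_k}^k)=\bm0$. Substituting this together with \eqref{eq4b} into $\nabla_{\vectheta_{i_k}}\mathcal L_\rho=\nabla f_{i_k}(\vectheta_{i_k})-\lambda_{i_k}-\rho(z-\vectheta_{i_k})$ cancels $\lambda_{i_k}$ and $z-\vectheta_{i_k}$ and leaves $\nabla f_{i_k}(\vectheta_{i_k}^{k+1})-\mu^{k+1}$, a proximal term $-\tau(\vectheta_{i_k}^{k+1}-\vectheta_{i_k}^k)$, and $\rho$- and $\rho N$-scaled $z$- and $\veclambda$-increments (the factor $N$ coming from \eqref{eq4c}). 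Writing $\nabla f_{i_k}(\vectheta_{i_k}^{k+1})-\mu^{k+1}=[\nabla f_{i_k}(\vectheta_{i_k}^{k+1})-\nabla f_{i_k}(\vectheta_{i_k}^k)]+[\nabla f_{i_k}(\vectheta_{i_k}^k)-\mu^{k+1}]$, I would bound the first bracket by $L\norm{\vectheta_{i_k}^{k+1}-\vectheta_{i_k}^k}$ using the $L$-Lipschitz gradient property (Assumption~\ref{asup:L-smooth}) and the second, in expectation, by $\tfrac2M(\iota^2+\sigma^2)$ using Lemma~\ref{lemma:muBound}; the inactive blocks $i\neq i_k$ are handled identically after tracing $\vectheta_i$ back to the iteration where agent $i$ was last active and telescoping the at most $N$ intervening $z$-increments.

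Collecting terms, each block of $\nabla\mathcal L$ is a sum of at most five contributions carrying coefficients $1$ (the estimation error), $L$, $\tau$, $\rho$, and $\rho N$; applying $\norm{\sum_{j=1}^{5}a_j}^2\le5\sum_{j=1}^5\norm{a_j}^2$ block-wise and adding the squared coefficients across blocks yields $\norm{\nabla\mathcal L(\bigtheta^k,\veclambda^k,z^k)}^2\le\varepsilon\big(\norm{\vectheta_{i_k}^{k}-\vectheta_{i_k}^{k-1}}^2+\norm{z^k-z^{k-1}}^2\big)$ plus an estimation-error term, with $\varepsilon=5+5L^2+5\tau^2+15\rho^2+10\rho^2N^2=5[1+L^2+\tau^2+3\rho^2+2(\rho N)^2]$. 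Averaging over $k=0,\dots,K$, taking $\mathbb E[\cdot]$, inserting Lemma~\ref{lem:converg} for the increment part and Lemma~\ref{lemma:muBound} for the residual estimation error (already $O(1/M)$), and using that the $O(1/M)$ part of Lemma~\ref{lem:converg} absorbs the latter, gives \eqref{eq:theo1}.

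The hard part is the primal-block manipulation: because the subproblem is solved with the memory-weighted noisy estimate $\mu^{k+1}$ rather than the true gradient, one must simultaneously absorb the proximal correction, bridge $\nabla f_{i_k}(\vectheta_{i_k}^{k+1})-\nabla f_{i_k}(\vectheta_{i_k}^k)$ through $L$-smoothness, and control $\mathbb E\norm{\mu^{k+1}-\nabla f_{i_k}(\vectheta_{i_k}^k)}^2$ via Lemma~\ref{lemma:muBound} — this last quantity is exactly the irreducible $O(1/M)$ noise floor, which is why the residual cannot be driven below $\tfrac{\varepsilon(\iota^2+\sigma^2)}{M\kappa}$. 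A secondary nuisance is the circulant activation: every block's ``previous'' value must be traced to the agent's last active step, so the telescoping runs over a full Hamiltonian cycle; keeping the accumulated increments $O(N)$ and the cross terms in check (repeated Young's inequality with matched weights) is what produces the clean five-term split and the stated $\varepsilon$.
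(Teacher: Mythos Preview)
Your proposal is correct and follows essentially the same route as the paper: use the optimality condition of \eqref{eq:primal_first_order} to rewrite $\partial_{\vectheta_{i_k}}\mathcal L_\rho$ as a sum of five terms (the estimation error $\nabla f_{i_k}(\vectheta_{i_k}^k)-\mu^{k+1}$, the $L$-smoothness increment, the proximal term, the $\lambda$-increment, and the $z$-increment), apply $\norm{\sum_{j=1}^5 a_j}^2\le 5\sum_j\norm{a_j}^2$, convert $\norm{\lambda_{i_k}^{k+1}-\lambda_{i_k}^k}^2$ into $\vectheta$- and $z$-increments via \eqref{eq4c}, observe $\partial_z\mathcal L_\rho=\bm0$, bound the dual block similarly, and finish with Lemma~\ref{lem:converg} and Lemma~\ref{lemma:muBound}. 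The one place you are more careful than the paper is your treatment of the inactive blocks $i\neq i_k$: the paper's Appendix~C only writes the computation for the active index $i_k$ and tacitly identifies this with the full gradient norm, whereas you explicitly trace each inactive coordinate back to its last active step and telescope the intervening $z$-increments; this is the honest way to obtain the stated constant $\varepsilon$ uniformly over all blocks.
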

\begin{proof}
See Appendix C the Proof of Theorem 1.
\end{proof}
\begin{remark}
Theorem 1 shows that with $K,M\to \infty$, the sequence $\{\bigtheta^k,\bm{\lambda}^k,z^k\}$ in asI-ADMM will converge to a \textbf{stationary point} of augmented Lagrangian (\ref{eq:lagaragian}). Note that Theorem 1 provides a sufficient condition to guarantee the convergence of the proposed asI-ADMM. Besides, the result presented in Theorem 1 indicates a $O(\frac{1}{k})+O(\frac{1}{M})$ convergence rate for asI-ADMM, which demonstrates that a larger batch size promotes faster convergence. 
\end{remark}
\begin{remark}
The proof of Theorem 1 also indicates that the convergence rate of asI-ADMM can be described as  $O(\frac{1}{k})+O(\frac{1}{M})+O(\frac{N^2}{k}) + O(\frac{N^2}{M})$. With the network size scaling up, the convergence speed is partially degraded by the square of network size $N$. 
\end{remark}

{\color{black}{So far we have presented a general framework for decentralized optimization, which can be applied to different settings with objectives $\{f_i\}$ accordingly. In the following, we will adopt the proposed method for decentralized RL with edge computing in IIoT networks. }  }

\section{Decentralized RL with ADMM} \label{sec:DRL}
We introduce essential background of RL and the policy search based methods in a decentralized setting in Subsections \ref{subsec_drl} and \ref{subsec:pg}, respectively. By formulating the decentralized RL problem into a consensus form similar to (\ref{eq:admmP2}), we adopt the proposed asI-ADMM method to decentralized RL and provide corresponding convergence analysis for the proposed algorithm in Subsection \ref{subsec:rl_admm}. 

\subsection{Decentralized RL}\label{subsec_drl}


Consider $N$ agents located on a time-invariant communication network denoted by $\mc G \! \!:= (\mc N, \mc E)$. We then define the MDP with networked agents in a decentralized setting. A networked multi-agent MDP is characterized by tuple $(\mc S, \mc A, \mc P, \beta, \alpha, g_{i\in \mc N})$, where $\mc S$ is the environmental state space and $\mc A$ is the action space; and $\mc P:\mc S \times \mc A \times \mc S \to [0,1]$ is the state transition probability of the MDP; and $\alpha \in (0,1)$ and $\beta$ are the discounting factor and the initial state distribution, respectively; and $g_i \! \! :\mc S \times \mc A \to \mathbb R$ is the local loss function of agent $i$. For different RL tasks in a IIoT edge computing, local loss function $g_i$ can be designed in different forms in order to guide the agents converging toward the objective. For example, $g_i$ can be defined based on the reciprocal duration of the job to minimize the average job slowdown in task offloading optimization. In an IIoT localization problem, the local loss for each agent can be the distance error to the target. In addition to the six-tuple, stochastic policy $\pi:\mc S \to \mc A$ maps the state to a distribution of all possible actions given the current state $\state_t$. We use $\pi(\action \vert \state)$ to denote the stochastic policy as the conditional probability of all possible actions given the state $\state$. In decentralized RL, the objective is to collaborative find the optimal policy $\pi$ to minimize the cumulative discounted loss over all agents.

We consider two decentralized multi-agent MDP settings, \textit{parallel agents} and \textit{collaborative agents}. In the parallel setting, agent $i$ aims to solve an independent MDP $(\mc S_i, \mc A_i, \mc P_i, \beta_i, \alpha, g_{i\in \mc N})$, in which the local action and state spaces as well as the transition probability of every agents are the same, i.e., $\mc S_i = \mc S, \mc A_i = \mc A$ and $\mc P_i = \mc P, \forall i \in \mc N$. Under such setting, multiple agents in the network aim to solve a common stochastic policy $\pi$. However, the loss function and the initial state distribution may be different across agents. Nevertheless, they are still quantities drawn form the same distribution satisfying $\beta(\state) =\mathbb E[\beta_i(\state)]$ and $g(\state,\action) = \mathbb E[g_i(\state,\action)]$ for all $(\state,\action)\in \mc S \times \mc A$. In a collaborative multi-agent setting, all agents share a global state $\state_t\in\mc S$ and $\action_t = (a_{1,t},a_{2,t},...,a_{N,t})$ is a joint action and its space can be denoted as $\mc A = \prod_{i\in\mc N} \mc A_i$, and $\action_t  \in \mc A$. Different from parallel setting, the joint action determines the transition probability to the next state $\state_{t+1}$ as well as the local loss function $g_i(\state_t,(a_{1,t},a_{2,t},...,a_{N,t}))$. This means $\pi$ is a joint policy with $\pi = (\pi_1,...,\pi_N)$ concatenating all local policies $\{\pi_i\}_{i\in\mc N}$. 

Consider the $\alpha$-discounted problem and discrete time $t\in \mathbb N$, policy $\pi$ can generate an infinite horizon state-action trajectory $\zeta_i:=\{\state_0, \action_0,\state_1, \action_1,...\}$ with $\state_t \in \mc S$ and $\action_t \in \mc A$. In decentralized RL, the objective is to collaboratively find the consensus optimal policy $\pi$ that minimizes the sum of discounted cumulative loss over all agents in the network, i.e.,
\begin{equation}
    \min_{\pi} \sum_{i\in\mc N} J_i(\pi),~\text{s.t. }J_i(\pi) = \mathbb E _{\zeta_i\sim\mathbb P(\cdot\vert \pi)}  \left\{  \sum_{t=0}^{\infty} \alpha^t g_i(\state_t, \action_t) \right\}, \label{eq:loss_pi}
\end{equation}
where $J_i(\pi)$ is the cumulative loss for agent $i$. The expectation in \eqref{eq:loss_pi} takes over the random trajectory $\bm{\zeta}$ of agent $i$ under policy $\pi$. We denote the probability of generating trajectory $\zeta_i$ given policy $\pi$ as 
\begin{equation}
    \mathbb P(\zeta_i \vert \pi) = \mathbb P(\state_0) \prod_{t=0}^{\infty} \pi(\action_t\vert\state_t) \mathbb P(\state_{t+1}\vert\state_t,\action_t),\label{eq:policy_prob}
\end{equation}
where $\mathbb P(\state_0)$ is the probability of initial state being $\state_0$, which is chosen randomly with regard to different initial state distribution of agent $i$, and $\mathbb P(\state_{t+1}\vert\state,\action)$ is the transition probability from state $\state_t$ to state $\state_{t+1}$ by taking action $\action_t$. 
\vspace*{-\baselineskip}

\subsection{Policy Gradient with ADMM}\label{subsec:pg}
Policy search is a subfield in RL, which focuses on finding optimized parameters for a given policy parametrization \cite{deisenroth2013survey}. Learning a policy is often easier than learning a model and its environment. Thus, model-free policy search methods are used more often than model-based methods in many applications. Among several update strategies, 
such as policy gradient (PG) \cite{williams1992simple}, inference approaches  and information-theoretic approaches, 
PG-based methods have been recognized as one of the most pervasive RL algorithms especially for tasks with large-scale state-action spaces. Classical PG such as REINFORCE \cite{williams1992simple} and G(PO)MDP \cite{baxter2001infinite} both use Monte-Carlo sampling to estimate the gradient. In our algorithm design and performance analysis, we will leverage the REINFORCE and give details in the following.

The stochastic optimization problem in \eqref{eq:loss_pi} has very nice differentialability properties that are lacking in the original deterministic form as in \cite{bertsekas2019reinforcement}. PG methods restrict the search for the best performing policy to a subset of parametrized policies. Particularly, policy $\pi$ is parametrized by parameters $\bm{\theta}\in\mathbb R^m$, which is denoted as $\pi(\cdot\vert\state;\bm{\theta})$, or $\pi(\bm{\theta})$ for simplicity. Note that we assume $\pi(\bm{\theta})$ is a discrete distribution in what follows. 
Accordingly, the problem in \eqref{eq:loss_pi} can be rewritten as
\begin{equation}
        \min_{\bm{\theta}} \sum_{i\in\mc N} J_i(\bm{\theta}),~\text{s.t. }J_i(\bm{\theta}) = \mathbb E _{\zeta_i\sim\mathbb P(\cdot \vert \bm{\theta})}  \left\{  \sum_{t=0}^{\infty} \alpha^t g_i(\state_t, \action_t) \right\}, \label{eq:loss_theta}
\end{equation}
where $J_i(\bm{\theta})$ is the long-term discounted loss of the parametric policy $\pi(\bm{\theta})$, and $\mathbb P(\cdot\vert\bm{\theta})$ is the probability distribution of sample trajectories $\bm{\zeta}_i$ under the policy $\pi(\bm{\theta})$. 
Then we may use a general gradient method \cite{bertsekas2019reinforcement} for solving the problem in \eqref{eq:loss_theta}. 
The method requires that $\pi(\bm{\theta})$ is differentiable with respect to $\bm{\theta}$. It relies on a convenient gradient formula, known as the \textit{log-likelihood trick} and involves the natural logarithm of the sampling distribution. Thus combining (\ref{eq:policy_prob}) and (\ref{eq:loss_theta}), the gradient of cumulative loss $J_i(\bm{\theta})$ of each agent can be calculated as
\begin{equation}
\begin{aligned}
        \nabla J_i(\bm{\theta})  &= \nabla    \left[\sum_{\vectheta\in\mc X} \mathbb P(\zeta\vert\vectheta)\sum_{t=0}^{\infty} \alpha^t g_i(\state_t, \action_t)   \right] \\ 
       &=  \sum_{\vectheta\in\mc X}\mathbb P(\zeta\vert\vectheta)   \left[\frac{\nabla \mathbb P(\zeta\vert\vectheta)}{\mathbb P(\zeta\vert\vectheta)} \sum_{t=0}^{\infty} \alpha^t g_i(\state_t, \action_t)  \right]\\
       & = \mathbb E_{\zeta_i \sim \mathbb P(\cdot\vert\bm{\theta})}  \left\{ \sum_{t=0}^{\infty} \nabla   [ \log \pi(\action_t\vert\state_t;\bm{\theta}) ] \alpha^t g_i(\state_t, \action_t)  \right \}. \label{eq:pg}
\end{aligned}
\end{equation}
%
To avoid costly full gradients computation (sometimes can be infeasible due to infinite spaces) of $\nabla J_i(\bm{\theta})$, REINFORCE adopts an Monte Carlo estimator. We denote $\zeta_{i,m} = (\state_0^{i,m}, \action_0^{i,m},\state_1^{i,m},\action_1^{i,m},...,\action_{T-1}^{i,m},\state_T^{i,m})$ as the $m$-th $T$-slot trajectories generated by policy $\pi(\cdot\vert\bm{\theta})$ at agent $i$. Hereby, the unbiased estimator of $\nabla J_i(\vectheta)$ with the $m$-th generated trajectory is given as follows
\begin{align}
    &\hat{\nabla}_m J_i(\bm{\theta};\zeta_m) = \notag\\
     &~~~\quad\sum_{t=0}^{T} \nabla  \log \pi (\action_t^{[i,m]}\vert\state_t^{[i,m]};\bm{\theta} )   \left[\sum_{t=0}^{T} \alpha^t g_i\big(\state_t^{[i,m]},\action_t^{[i,m]}\big)  \right], \label{eq:pg_sample}
\end{align}
and the corresponding mini-batch PG is given as
\begin{equation}
    d_i(\vectheta;\bm{\zeta}_i) = \frac{1}{M}\sum_{m=1}^M \hat{\nabla}_m J_i(\bm{\theta};\zeta_{i,m}).\label{eq:pg_minibatch}
\end{equation}

Finally we introduce policy parametrization for $\pi(\vectheta)$. Since we only deal with discrete action space, to ensure differentiability of $\pi$ with respect to $\vectheta$, we apply soft-max approximation for policy function:
\begin{equation}
    \pi(\action\vert\state;\vectheta) = \frac{e^{\vectheta^T\phi(\state,\action)}}{\sum_{u\in\mc A}e^{\vectheta^T\phi(\state,u)}},
\end{equation}
where $\phi(\state)$ is a bounded feature mapping from state space $\mc S$ to $\mathbb R^d$ and $\phi(\state,\action)$ denotes the value of action $\action$ in $\phi(\state)$. Moreover, the Gaussian policy is often applied in the control task where action spaces are continuous \cite{papini2018stochastic}.
 
\begin{algorithm}[t] 
	\caption{asI-ADMM for decentralized RL} 
	\begin{algorithmic}[1]
    \STATE \textbf{initialize}: $\{\bm{\theta}_i^0 =  \lambda_i^0=z^0=\mu^0=\bm{0}, k=0,\bar{\eta}, \iota |i\in\mathcal{N}\}$
		\FOR{$k=0,1,...$}
		\STATE agent $i_k = k \mod N + 1$ do: 		
		\STATE collect $M$ trajectories $\bm \zeta_{i_k}^k$ with parameter $\vectheta_{i_k}^k$, compute mini-batch PG component $d_{i_k}(\vectheta_{i_k}^k;\bm{\zeta}_{i_k}^k)$ according to (\ref{eq:pg_minibatch});
		\STATE \textbf{receive} tokens $\mu^{k}$ and $z^{k }$;  
		\STATE \textbf{choose} $\eta^{k}$ according to (\ref{eq:eta});
		\STATE \textbf{update} $\mu^{k+1}$ according to (\ref{eq:mu2}); 
		\STATE \textbf{update} $\bm{\theta}_{i_k}^{k+1}$ according to (\ref{eq4a});
		\STATE \textbf{update} $\lambda_{i_k}^{k+1}$ according to (\ref{eq4c});
		\STATE \textbf{update} $z^{k+1}$ according to (\ref{eq4b});
		\STATE \textbf{send} token $z^{k+1} $ and $\mu^{k+1}$ to agent $i_{k+1}= (k+1) \mod N + 1$;  
		\ENDFOR 
	\end{algorithmic} \label{alg:RLEI_ADMM}
\end{algorithm} 

\subsection{asI-ADMM for Decentralized RL and Convergence Analysis}\label{subsec:rl_admm}
Our goal is to find an optimal consensus policy to minimize overall loss across the agents. Thus, (\ref{eq:loss_theta}) has the same form as the optimization problem in (\ref{eq:admmP2}). However, for online optimization such as RL, training samples are generated in real-time by interacting with the environment, which may need faster response to varying environments. Thus, we propose to use online asI-ADMM for decentralized RL problem in (\ref{eq:loss_theta}). The essential part of the algorithm is to use the weighted EMA estimator for true PG:
\begin{equation}
    \mu^{k+1} = \eta^k\mu^k + (1-\eta^k)d_i(\vectheta;\bm{\zeta}_i),\label{eq:mu2}
\end{equation}
where $d_i(\vectheta;\bm{\zeta}_i)$ is obtained from (\ref{eq:pg_minibatch}), which is of critical importance in decentralized RL. Within heterogeneous agents appear in the environment, initial state distribution and local loss can be varying across the agents. Sending $\mu$ as a token can be helpful for consensus convergence. We can see later in the simulation that the adaptive method converges faster and obtain the better local optimum especially in the heterogeneous environment. 

We describe asI-ADMM based method for decentralized RL in Algorithm \ref{alg:RLEI_ADMM}. The updating order is predetermined by the Hamiltonian cycle. In the case where the communication network is time-varying with agents moving during learning process, then updating order can follow random walk as in \cite{WADMM}. Note that the update steps for $\vectheta,\lambda$ and token $z$ at steps 7, 8, and 9 are independent of sample generation at step 4. Thus these two parts can be ran in parallel at current agent and next agent, respectively. Next we will show the convergence properties of asI-ADMM based algorithm for decentralized RL. We firstly introduce following assumptions that serve as stepping stone for subsequent analysis:
\begin{assumption} \label{assump:rl1}
For each state-action pair $(\bm{s},\bm{a})$, the local loss
is bounded as $g_i(\bm{s},\bm{a})\in [0,\overline{g}_i]$, and for each parameter $\bm{\theta}\in \mathbb{R}^m$, the cumulative loss of learner $i$ is bounded as $J_i(\bm{\theta})\in [0,\frac{\overline{g}_i}{1-\gamma}]$.
\end{assumption}
\begin{assumption} \label{assump:rl2}
For each state-action pair $(\bm{s},\bm{a})$ and any policy parameter $\bm{\theta}$, there exists constants $c_1$ and $c_2$ such that
\begin{equation}
    \| \nabla \log \bm{\pi}(\bm{a}|\bm{s};\bm{\theta})  \|\leq c_1,~ \bigg| \frac{\partial^2}{\partial \bm{\theta}_l  \partial \bm{\theta} _k } \log \bm{\pi}(\bm{a}|\bm{s};\bm{\theta})  \bigg|\leq c_2,
\end{equation}
where $\theta_l$ and $\theta_k$ denote the $l$-th and the $k$-th entries of $\bm{\theta}$.
\end{assumption}
Assumption 5 requires local loss and cumulative loss to be bounded. The assumption is common in RL algorithms. Assumption 6 requires the gradient of $\log\pi(\action\vert\state;\vectheta)$ and its partial derivatives to be bounded. This can be satisfied by a wide range of stochastic policies including soft-max policies and Gaussian policies. We show that Assumptions 5 and 6 are sufficient to guarantee the smoothness of the objective function in (\ref{eq:loss_theta}) as follows. 
 \begin{lemma}\label{lemma:L_smmoth}
Under Assumptions \ref{assump:rl1} and \ref{assump:rl2}, the accumulated loss $J_i(\bm{\theta})$ for agent $i$ is $\bar{L}_i$-smooth, i.e.,
\begin{equation}\label{eq:L-smooth1}
    \|\nabla J_i(\bm \theta_1)-\nabla J_i(\bm \theta_2)   \| \leq L_i \| \bm \theta_1 - \bm \theta_2  \| ,~\forall \bm{\theta}_1,\bm{\theta}_2\in\mathbb{R}^m,
\end{equation}
where
\begin{equation}
   \bar{L}_i:= \left( c_1^2 + c_2 + \frac{2\kappa c_1^2}{1-\kappa}  \right) \frac{\kappa \overline{g}_i}{(1-\kappa)^2},\label{eq:L-smooth}
\end{equation}
and (\ref{eq:L-smooth1}) is equivalent to 
    \begin{equation}
    J_i(\bm \theta_1) \leqslant  J_i(\bm \theta_2) +  \langle  \nabla J_i(\vectheta_2), \vectheta_1 - \vectheta_2   \rangle + \frac{\bar{L}_i}{2} \norm{\vectheta_1-\vectheta_2}^2.
    \end{equation}
\end{lemma}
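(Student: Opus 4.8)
The plan is to prove the equivalent statement that the Hessian of $J_i$ is uniformly bounded in spectral norm, $\sup_{\bm\theta\in\mathbb R^m}\|\nabla^2 J_i(\bm\theta)\|\le \bar L_i$. This yields (\ref{eq:L-smooth1}) by integrating $\nabla^2 J_i$ along the segment joining $\bm\theta_2$ and $\bm\theta_1$, and it yields the descent inequality at the end of the lemma through a second-order Taylor expansion with Lagrange remainder. The first step is to put the policy gradient (\ref{eq:pg}) into its causal (GPOMDP-type) form
\[
\nabla J_i(\bm\theta)=\mathbb E_{\zeta_i\sim\mathbb P(\cdot\vert\bm\theta)}\Big[\sum_{t=0}^{\infty}\alpha^t g_i(\state_t,\action_t)\,\Psi_t\Big],\qquad \Psi_t:=\sum_{s=0}^{t}\nabla\log\pi(\action_s\vert\state_s;\bm\theta),
\]
which is legitimate because $\mathbb E[\nabla\log\pi(\action_s\vert\state_s;\bm\theta)\,\alpha^t g_i(\state_t,\action_t)]=0$ for $s>t$: conditioned on the history up to time $s-1$ the score at time $s$ has zero mean, while $g_i(\state_t,\action_t)$ is measurable with respect to that history. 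Differentiating once more in $\bm\theta$ and applying the same conditioning argument to cancel the non-summable tails contributed by future scores, one obtains
\[
\nabla^2 J_i(\bm\theta)=\mathbb E\Big[\sum_{t=0}^{\infty}\alpha^t g_i(\state_t,\action_t)\big(\Psi_t\Psi_t^{\trs}+\Xi_t\big)\Big],\qquad \Xi_t:=\sum_{s=0}^{t}\nabla^2\log\pi(\action_s\vert\state_s;\bm\theta).
\]

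The second step is to bound this using the hypotheses: Assumption \ref{assump:rl1} gives $|g_i(\state_t,\action_t)|\le\overline{g}_i$, while Assumption \ref{assump:rl2} gives $\|\Psi_t\|\le (t+1)c_1$ and $\|\Xi_t\|\le (t+1)c_2$ (the latter because an entrywise bound of $c_2$ on each Hessian $\nabla^2\log\pi$ controls its norm). Pushing the norm through the expectation and evaluating the elementary series $\sum_{t\ge0}(t+1)\alpha^t=(1-\alpha)^{-2}$ and $\sum_{t\ge0}(t+1)^2\alpha^t=(1+\alpha)(1-\alpha)^{-3}$ gives
\[
\|\nabla^2 J_i(\bm\theta)\|\le \overline{g}_i\Big(\frac{(1+\alpha)c_1^2}{(1-\alpha)^3}+\frac{c_2}{(1-\alpha)^2}\Big)=\frac{\overline{g}_i}{(1-\alpha)^2}\Big(c_1^2+c_2+\frac{2\alpha c_1^2}{1-\alpha}\Big),
\]
which, writing the discount factor as $\kappa$, reproduces the constant $\bar L_i$ in (\ref{eq:L-smooth}) after routine bookkeeping; the descent inequality then follows at once.

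I expect the main obstacle to be the rigorous handling of the infinite horizon. Interchanging the two differentiations with the infinite sum and the expectation must be justified by dominated convergence, using that each summand is dominated by $\alpha^t\overline{g}_i$ times a polynomial in $t$ together with the uniform bounds of Assumptions \ref{assump:rl1} and \ref{assump:rl2}; and the causality (martingale) cancellation must be invoked carefully, twice, precisely to annihilate the divergent partial sums $\sum_{s>t}\nabla\log\pi(\action_s\vert\state_s;\bm\theta)$ that would otherwise spoil the bound. A more elementary but messier alternative would be to bound $\|\nabla J_i(\bm\theta_1)-\nabla J_i(\bm\theta_2)\|$ directly by showing that both $\mathbb P(\zeta\vert\bm\theta)$ and $\nabla\log\mathbb P(\zeta\vert\bm\theta)$ are Lipschitz in $\bm\theta$; I would avoid this route because the infinite-horizon sampling distribution makes those Lipschitz estimates awkward.
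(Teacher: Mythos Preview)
Your approach is correct and is the standard route to this smoothness result: bound $\|\nabla^2 J_i(\bm\theta)\|$ uniformly via the causal policy-Hessian formula, then integrate. The paper itself does not supply a proof but simply defers to \cite{chen2019communication}; the argument there is essentially the one you outline, so your proposal fills in what the paper omits.

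Two small points are worth tightening. First, the cancellation needed for the Hessian is not literally ``the same conditioning argument'' you used for the gradient: for $s>t$ the diagonal terms $\nabla\log\pi(\action_s\vert\state_s;\bm\theta)\,\nabla\log\pi(\action_s\vert\state_s;\bm\theta)^{\trs}$ do \emph{not} have conditional mean zero on their own; they cancel against the corresponding future $\nabla^2\log\pi$ terms because $\mathbb E_{\action\sim\pi}\big[\nabla^2\log\pi+\nabla\log\pi\,\nabla\log\pi^{\trs}\big]=0$. You should state this explicitly rather than fold it into the martingale remark. Second, Assumption~\ref{assump:rl2} bounds the \emph{entries} of $\nabla^2\log\pi$ by $c_2$, which only yields an operator-norm bound of order $mc_2$, not $c_2$; the constant $\bar L_i$ in (\ref{eq:L-smooth}) carries the same imprecision, so this is a shared looseness rather than a gap in your argument. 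Finally, your computed constant differs from (\ref{eq:L-smooth}) by the extra factor $\kappa$ in the numerator there; this traces to the reward-indexing convention in the cited reference and does not affect the method.
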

\begin{IEEEproof}
This can be proved as Lemma \ref{lemma:L_smmoth} in \cite{chen2019communication}.
\end{IEEEproof}
Given that REINFORCE is an unbiased PG estimator, the gradients in (\ref{eq:pg_sample}) and (\ref{eq:pg_minibatch}) are naturally satisfied with Assumptions 3 and 4 as in Section \ref{sec:admm}. 
Then based on Lemma \ref{lemma:L_smmoth}, the convergence property for Algorithm \ref{alg:RLEI_ADMM} can be concluded as follows.
\begin{corollary}
The sequence $\{\bm{\Theta}^k,\bm{\lambda}^k,z^k \vert k = 1,...,K\}$ generated from Algorithm \ref{alg:RLEI_ADMM} satisfies convergence property (\ref{eq:theo1}) but with different Lipschiz constants $\{\bar{L}_i|i\in\mathcal{N}\}$ as in (\ref{eq:L-smooth}).  
\end{corollary}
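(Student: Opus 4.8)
The plan is to recognize that Algorithm~\ref{alg:RLEI_ADMM} is precisely the online instantiation of Algorithm~\ref{alg:aspiADMM} applied to the consensus problem \eqref{eq:loss_theta}, so that Theorem~\ref{theorem1} applies once its standing hypotheses (Assumptions~2--4) are re-established in the RL setting, with the single smoothness constant $L$ replaced by the per-agent constants $\bar L_i$ of \eqref{eq:L-smooth}. Indeed, \eqref{eq:loss_theta} has exactly the structure of \eqref{eq:admmP2} with $J_i(\vectheta_i)$ in the role of $f_i(\vectheta_i;\mc D_i)$; the primal, dual and token updates in steps~8--10 of Algorithm~\ref{alg:RLEI_ADMM} coincide with \eqref{eq4a}--\eqref{eq4c}; and the EMA token \eqref{eq:mu2} is \eqref{eq:mu_update} with the mini-batch gradient $G_{i_k}(\vectheta_{i_k}^k;\bm\zeta_{i_k}^k)$ replaced by the REINFORCE estimator $d_{i_k}(\vectheta_{i_k}^k;\bm\zeta_{i_k}^k)$ of \eqref{eq:pg_minibatch}. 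The only structural change is that the $M$ samples are now $T$-slot trajectories generated online under $\pi(\vectheta_{i_k}^k)$ rather than drawn from a fixed $\mc D_i$; since the zero initialization still gives $\sum_{i\in\mc N}(\vectheta_i^0-\lambda_i^0/\rho)=\bm 0$, the incremental identity \eqref{eq4c} and the Hamiltonian activation order are untouched.

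Next I would verify the hypotheses of Theorem~\ref{theorem1}. Assumption~1 is postulated. $\bar L_i$-smoothness of each $J_i$ is exactly Lemma~\ref{lemma:L_smmoth}, which furnishes the descent inequality \eqref{eq:Lsmooth2} with $L=\bar L_i$ and, globally, with $L=\bar L:=\max_{i\in\mc N}\bar L_i$; lower-boundedness of $J_i$ follows from Assumption~\ref{assump:rl1}. The score-function identity \eqref{eq:pg} shows that $d_i$ is an unbiased estimator of $\nabla J_i$, so Assumption~3 holds; and from Assumption~\ref{assump:rl2} each factor $\nabla\log\pi(\action_t\vert\state_t;\vectheta)$ in \eqref{eq:pg_sample} has norm at most $c_1$ while, by Assumption~\ref{assump:rl1}, the discounted return is bounded, so $\|\hat\nabla_m J_i(\vectheta;\zeta_{i,m})\|^2$ is uniformly bounded by a constant $\delta^2$; averaging over the $M$ trajectories then yields $\mathbb E[\|d_i(\vectheta;\bm\zeta_i)-\nabla J_i(\vectheta)\|^2]\le\sigma^2/M$, i.e., Assumption~\ref{assump2}.

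With these in hand, Lemmas~\ref{lemma:muBound}, \ref{lem:uppB}, \ref{lem:converg} and Theorem~\ref{theorem1} carry over verbatim to Algorithm~\ref{alg:RLEI_ADMM}: their proofs use $f_i$ only through \eqref{eq:Lsmooth2}, unbiasedness, and the variance bound, so each step in Appendices~A--C survives after replacing $L$ by $\bar L$ (and, in the few places where a sum over agents appears, by $\sum_{i\in\mc N}\bar L_i$ or $N\bar L$). This reproduces \eqref{eq:theo1}, now with $\tau\ge(\bar L+\rho-1)/2$ and with $\kappa$, $\varepsilon$ expressed through $\{\bar L_i\}$, which is the assertion of the corollary.

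The main obstacle is the finite-horizon truncation: strictly, the $T$-slot estimator \eqref{eq:pg_sample} is a biased estimate of the infinite-horizon $\nabla J_i(\vectheta)$, with bias of order $\alpha^{T}$, so Assumption~3 is only \emph{approximately} met. I would resolve this either by taking $T$ of order $\log M$ so the truncation bias is dominated by the $O(1/M)$ term already present in \eqref{eq:theo1}, or by redefining the objective in \eqref{eq:loss_theta} as the $T$-truncated return, for which \eqref{eq:pg} is exact. Beyond that, only the routine bookkeeping of re-tracing Appendices~A--C to confirm that the substitution $L\mapsto\bar L$ preserves every inequality and the stated constants remains; no new idea is needed, since Algorithms~\ref{alg:aspiADMM} and \ref{alg:RLEI_ADMM} run the same primal--dual recursions.
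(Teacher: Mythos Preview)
Your proposal is correct and follows essentially the same approach as the paper, which simply states that the proof is similar to that of Theorem~\ref{theorem1}; you have merely made explicit the verification that Assumptions~2--4 hold in the RL setting via Lemma~\ref{lemma:L_smmoth} and Assumptions~\ref{assump:rl1}--\ref{assump:rl2}, after which the argument of Theorem~\ref{theorem1} carries over with $L$ replaced by $\bar L_i$. Your observation about the finite-horizon truncation bias is a genuine subtlety that the paper does not address, but it does not alter the underlying strategy.
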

\begin{IEEEproof}
The proof is similar as that of Theorem 1.
\end{IEEEproof}
Note that Corollary 1 indicates a $O(\frac{1}{k}) + O(\frac{1}{M})$ convergence rate for Algorithm 2. 

\section{Numerical Experiments}\label{sec:results}
We will evaluate the performance of proposed stochastic algorithms on two decentralized consensus optimization problems: 1) Decentralized regression problems; 2) Decentralized RL, in Subsections \ref{subsec:result_regre} and \ref{subsec:restul_drl}, respectively. For the simulation, we generate the connected network $\mc G$ with $N$ agents and $\vert\mc E\vert = \frac{N(N-1)}{2}\omega$ links, where $\omega$ is the network connectivity ratio. This ensures a Hamiltonian cycle in $\mc G$.

\subsection{Decentralized Regression Problems}\label{subsec:result_regre}
   In what follows, we will consider the decentralized regression problems: least square and logistic regression. The former aims to solve (\ref{eq:admmP2}) with a loss function
   \begin{equation}
       f_i(\vectheta;\bm{\zeta}_i) = \frac{1}{M}\sum_{m=1}^{M}\norm{\vectheta_i^T o_{i,m} - t_{i,m}}^2,
   \end{equation}
where $\bm{\zeta}_i = \{o_{i,m},t_{i,m}\}_M$ is the data samples drawn from dataset $\mc D_i$. In simulations, the entries of input $o_{i,.m}\in \mathbb R^{10}$ and the output measurement $t_{i,m}\in\mathbb R$ follows $t_{i,m} = \vectheta^* o_{i,m} + e_{i,m}$, where $e_{i,m} \sim \mc N(0,\sigma^2 I)$ is random noise. The loss function for logistic regression is given as
\begin{equation}
    f_i(\vectheta;\bm{\zeta}_i) = \frac{1}{M_i}\sum_{m=1}^M \log(1+\exp(-t_{i,m}\vectheta_i^To_{i,m})),
\end{equation}
where $t_{i,m}\in\{-1,1\}$. Sample feature $o_{i,m}$ follows $\mc N(0,1)$ distribution. To generate $t_{i,m}$, we first generate $\vectheta\in\mathbb R^2$, and for each sample generated $v_{i,m}\sim \mc U(0,1)$, if $v_{i,m}\leqslant (1+\exp(-\vectheta^T o_{i,m})^{-1}$ then $t_{i,m} = 1$, otherwise $-1$. We set $r$ as the ratio for mini-batch samples.

We evaluate the convergence of proposed algorithm with state-of-the-art approaches regarding the accuracy defined by
\begin{equation}
    \text{accuracy} = \frac{1}{N}\sum_{i=1}^N \frac{\norm {\vectheta_i^k - \vectheta^*}^2}{\norm{\vectheta_i^0 - \vectheta^*}^2},
\end{equation} 

 where $\bm{\theta}^*$ is the optimal solution of (\ref{eq:admmP2}). For decentralized ridge and logistic regressions, the parameters for each method are set as follows: the stepsize $\gamma = \{0.5,0.05\}$ for EXTRA \cite{shi2015extra}; $\rho=\{0.01,1\}$ for D-ADMM \cite{DADMM};  $\alpha=\{0.5,0.05\}$ for DGD \cite{DGD}; $\rho=\{3,1\}$ for W-ADMM \cite{WADMM}; $\rho=\{3,1\}, \tau=\{0.5,0.2\}$ for proximal I-ADMM (prox. I-ADMM) and proximal stochastic (prox. sI-ADMM), and mini-batch ratio is set $r=0.1$ for prox. sI-ADMM; $\rho = \{3,1\}, \tau = \{0.2,0.2\}, \bar{\eta} = \{0.9,0.9\}$ and $\iota^2=\{10,10\}$ for asI-ADMM, respectively.
 

 The results of accuracy over communication costs for ridge regression and logistic regression are shown in Fig. \ref{fig:least_square}. Among all compared algorithms, the incremental based algorithms including our proposed algorithm, mini-batch method sI-ADMM and WADMM are more communication efficient than the gossip-based benchmarks, such as EXTRA, DADMM, DGD and COCA. We can see that there is gap between I-ADMM where a full batch is used at each iteration and stochastic I-ADMM methods where only a mini-batch is used for updating (with ratio $r=0.1$ in our case). Our proposed asI-ADMM converges faster than other methods and it also converges to a better optimal solution than sI-ADMM by using the same amount of samples. This is because the proposed asI-ADMM uses more information about past gradients in $\mu$ while sI-ADMM uses only current sample information. To keep the advantage of this gradient memory, a large $\bar{\eta}$ is needed (close to 1)\cite{strang2019linear,adam}. 


    \begin{figure}[t] 
     	\centering
     	\vskip -0.05  in
    	\subfloat[ ]{\hspace*{1mm}\includegraphics[width =45mm]{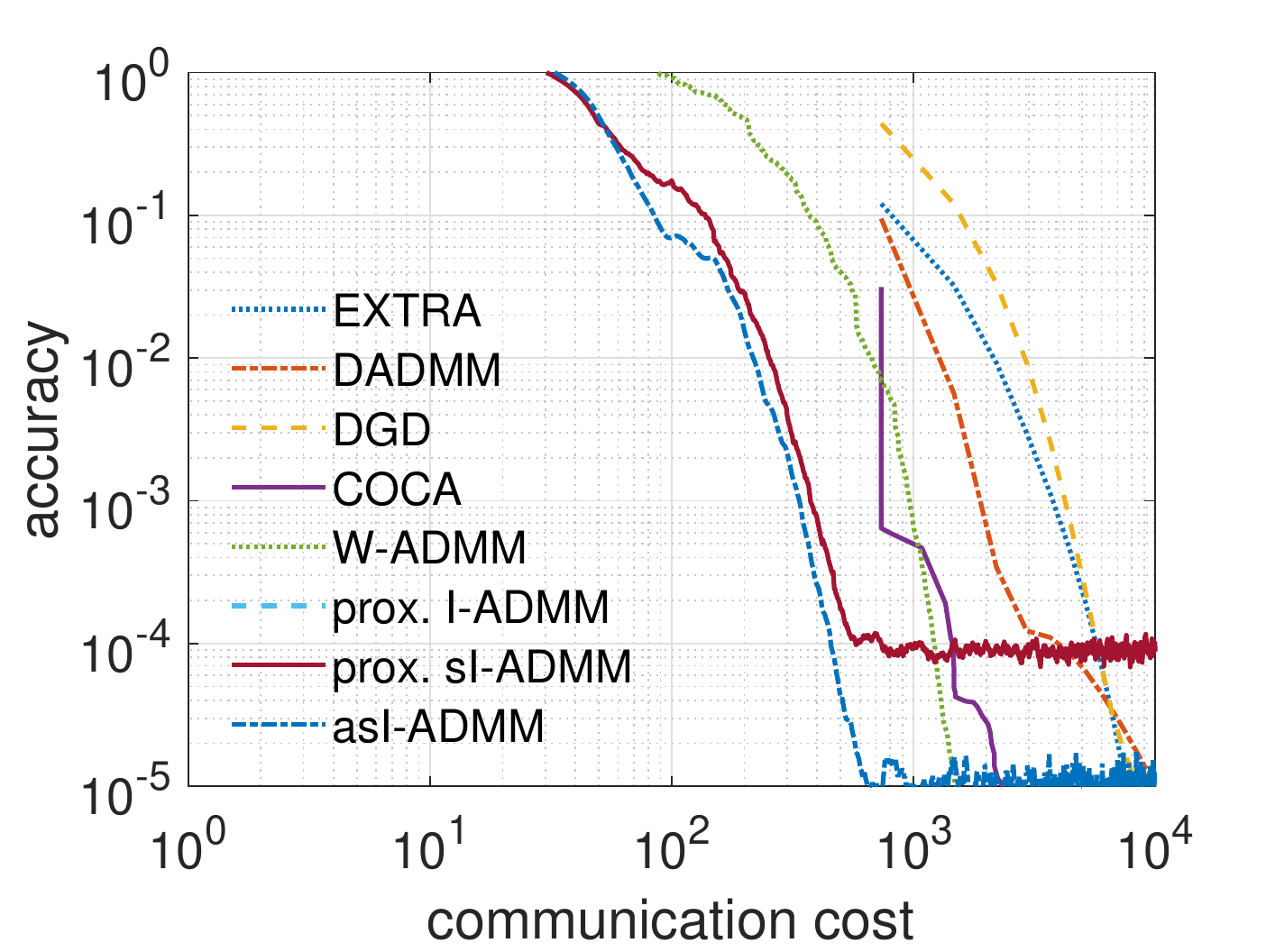}\label{fig_a}}
     	\subfloat[ ]{\hspace*{-2mm}\includegraphics[width =45mm]{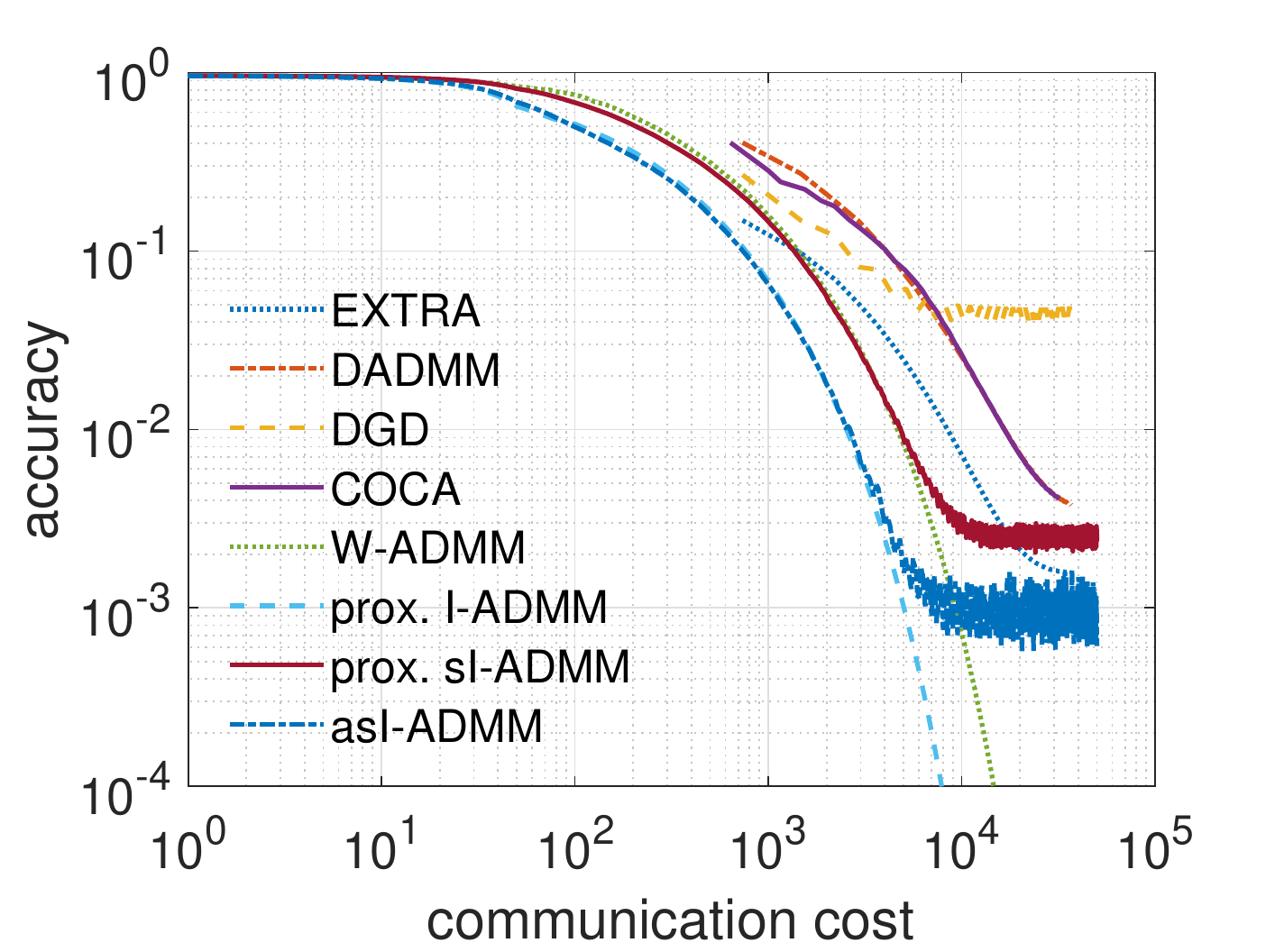}\label{fig_b}}
     	\vskip -0.1  in
     	\caption{ $T=50,\omega=0.3,r=0.1$: (a) Decentralized least square regression; (b) Decentralized logistic regression}
     	\label{fig:least_square}
     	\vskip -0.05 in
     \end{figure}

 \subsection{Decentralized RL}\label{subsec:restul_drl}
 To evaluate performance of asI-ADMM based algorithm in decentralized RL, we conduct two numerical experiments: localization task and computation resource management task.

 We modify the environment in \cite{UAVrl} to a decentralized RL setting where $N$ agents (UAVs) aim to work together to reach a specific target. Each agent can choose a set of four actions \textit{\{north, south, west, east\}} as shown in Fig. \ref{fig:target_loc}. The state is the positions of UAVs in the searching grid. Assuming power-law path-loss model, the power of signal received by the $i$-th agent is $P_i = l_0 \frac{P_0}{d_i^\nu} + e_i$, where $\frac{1}{l_0}$ is the path-loss at the reference distance of 1m, $d_i^\nu$ is the distance between the target and the $i$-th agent and $\nu$ is the power loss exponent. And $e_i$ is a random variable (r.v.), which accounts for the estimation error. We assume the scenario contains only light-of-sight components. The estimated position of agents can be obtained as in \cite{UAVrl}. The reward function is defined as:
 \begin{align}
     g_i(t) = \left\{
     \begin{aligned}
         & r_i, ~\text{if $d_i(t) < d_0$}; \\
    & -d_i(t), ~ \text{otherwise},
     \end{aligned}
     \right.
 \end{align}
where $d_0$ is a distance threshold, and $r_i$ is a positive scalar representing the priority level for agent $i$. 
 In homogeneous configuration, all agents have the same initial distribution and equal priority with their proximity to the target. 
 
 \subsubsection{Target localization}
In the experiments, we generate time-invariant communication network $\mc G$ to ensure the updating order predetermined by a Hamiltonian cycle. The discounting factor is set to $\alpha=0.99$ in all tests. $\omega$ is set to 0.3 for a 5-agent setup and 0.8 for 10-agent setup. For each episode, all algorithms terminate after $T=50$. The control parameter $\iota^2 = 10$ for all experiments. The targeted local policy of each agent $\pi(\vectheta_i)$ is parameterized by a linear function. The parameters for channel model are given as $l_0 = 20.7, \nu = 3.04$ and $e_i \sim \mc N(0,\sigma^2 I)$ \cite{UAVrl}.
  \setcounter{figure}{4}   
 \begin{figure*}[t] 
     	\centering
     	\vskip -0.0  in
     	\subfloat[ ]{\hspace*{-2mm}\includegraphics[width=45mm]{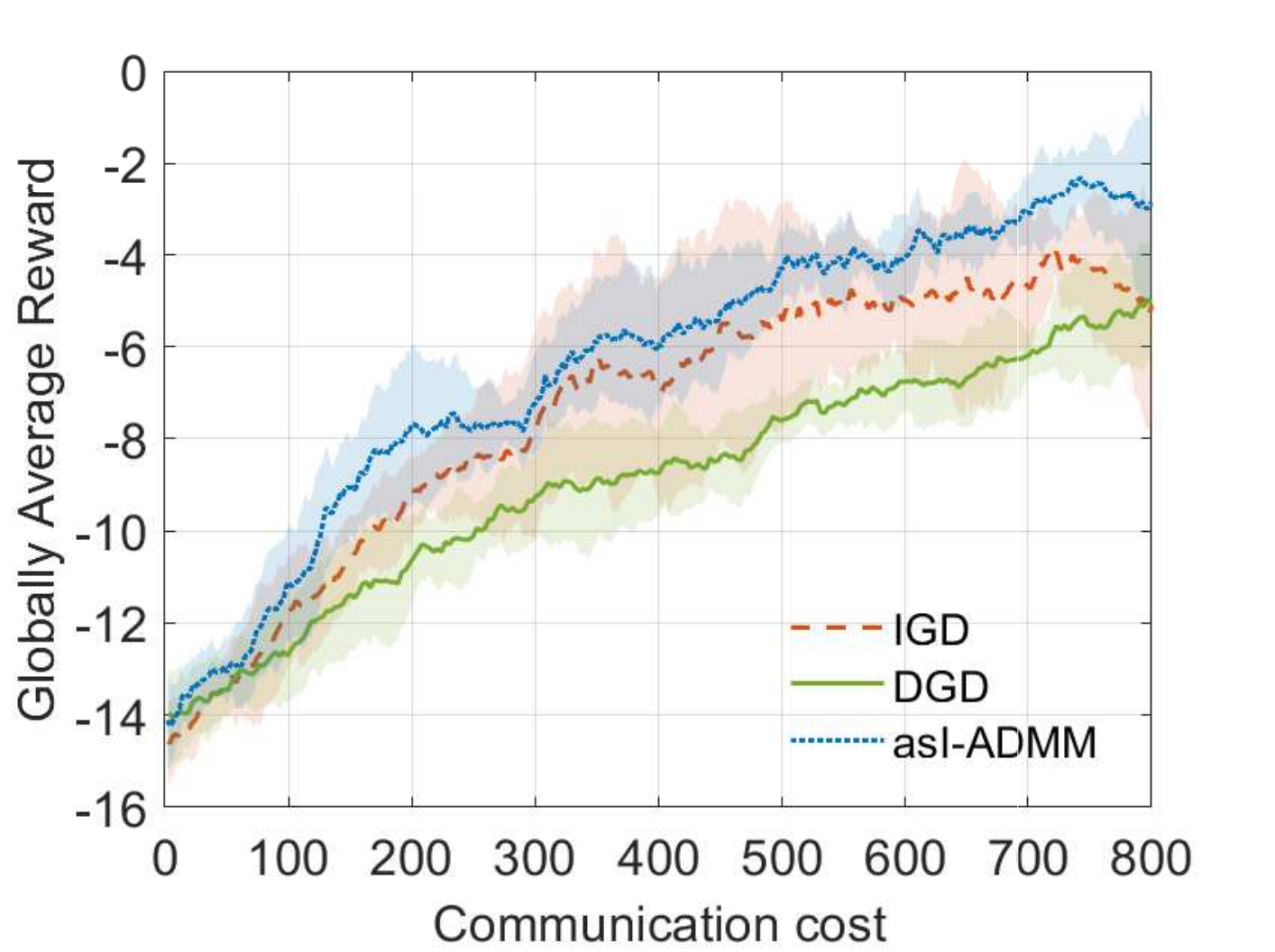}\label{fig1_b}}
     	\subfloat[ ]{\hspace*{-1mm}\includegraphics[width=45mm]{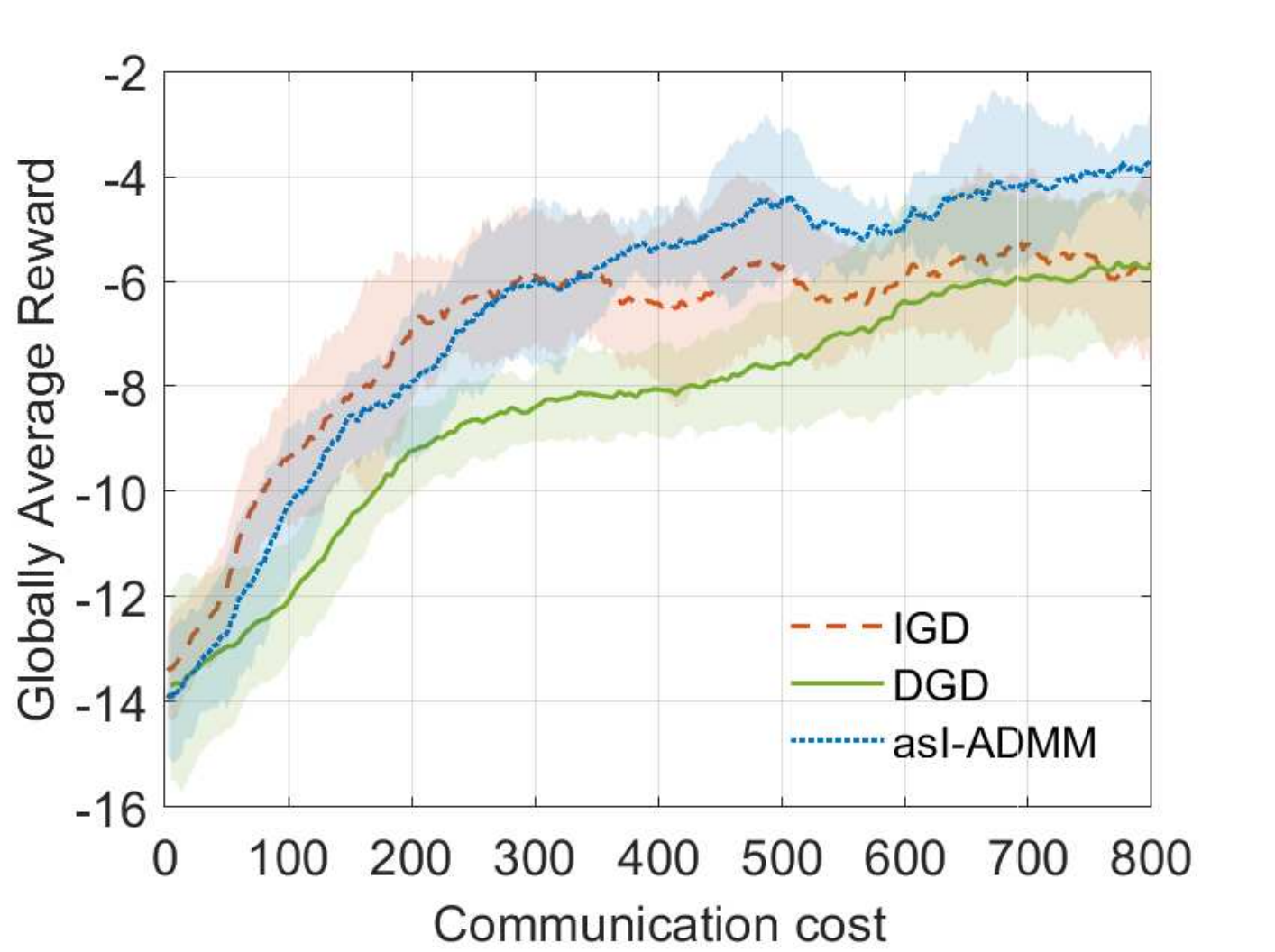}\label{fig1_d}}
     	\subfloat[ ]{\hspace*{-1mm}\includegraphics[width =45mm]{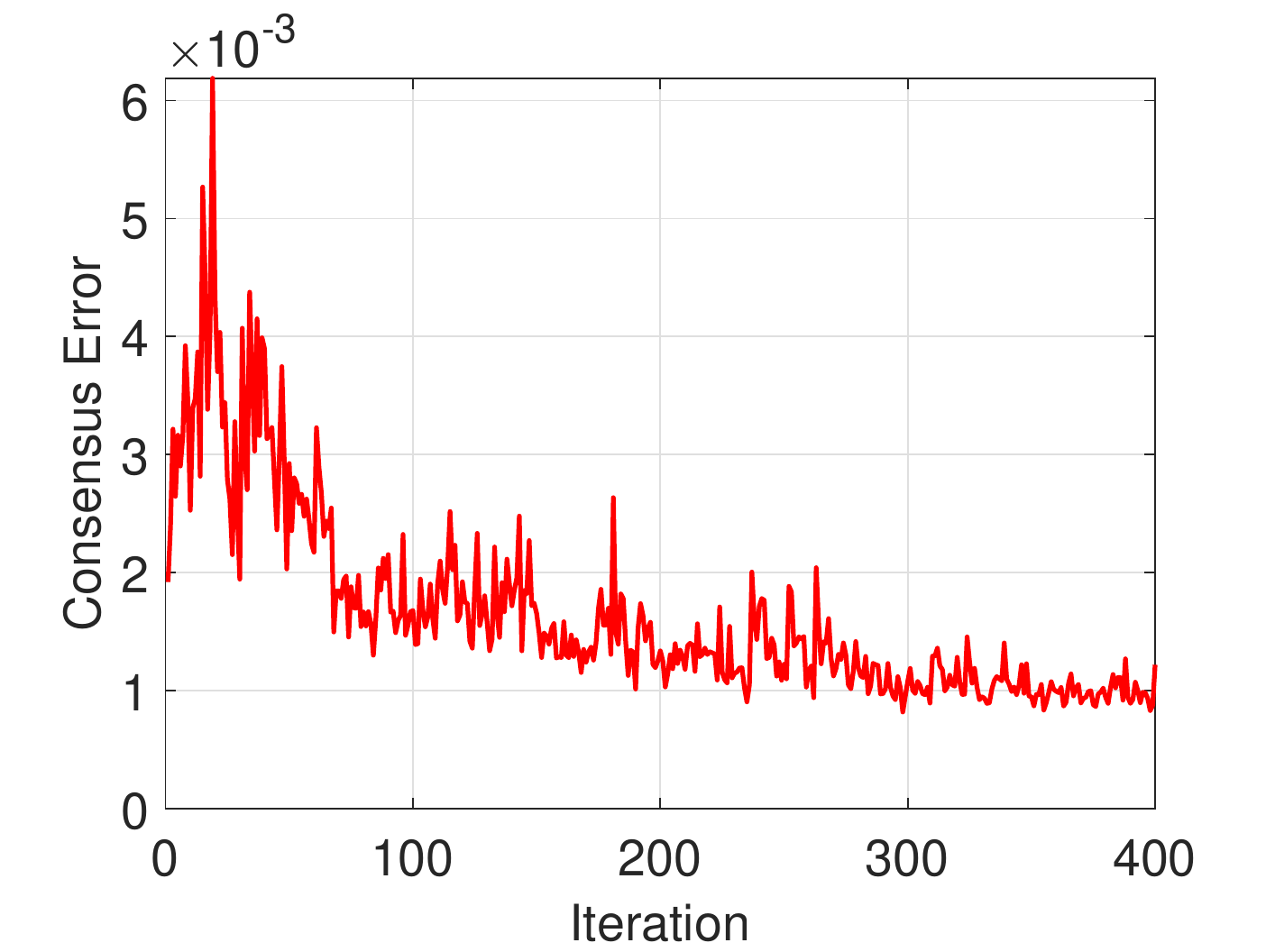}\label{fig_a}}
     	\subfloat[ ]{\hspace*{-1mm}\includegraphics[width=45mm]{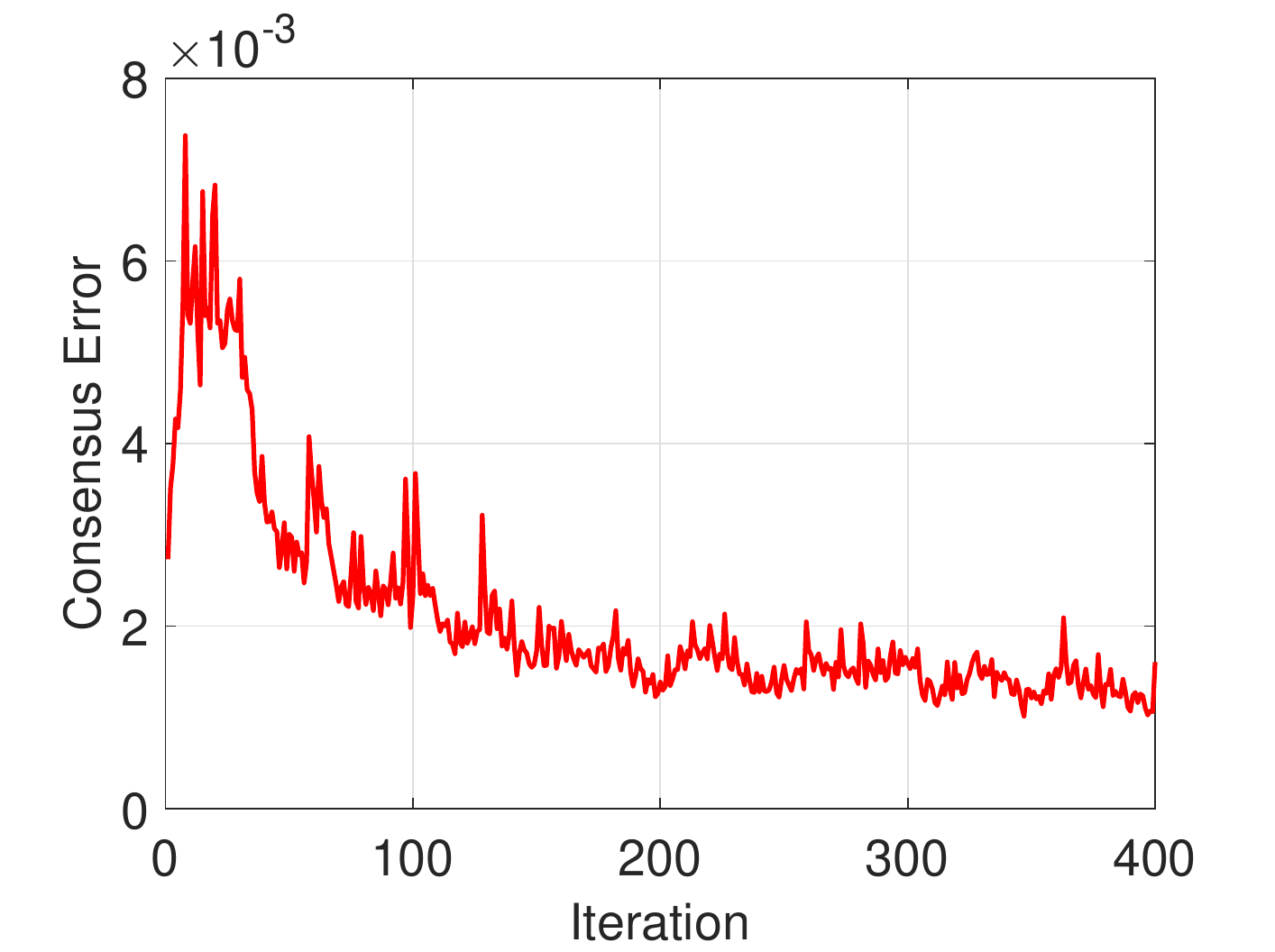}\label{fig_c}}
     	\vskip -0.05 in
     	\caption{Iteration and communication complexity in homogeneous environment: (a)(b) $N=5$, IGD ($\gamma = 0.095$), DGD ($\alpha = 0.09$), asI-ADMM ($\rho = 1, \tau = 10, \bar{\eta} = 0.8$) ; (c)(d) $N=10$, IGD ($\gamma = 0.095$), DGD ($\alpha = 0.09,$), asI-ADMM ($\rho = 1, \tau = 10, \bar{\eta} = 0.8$).}
     	\label{fig:rl_homo} 
 \end{figure*}

  \setcounter{figure}{5}
 \begin{figure*}[t] 
     	\centering
     	\vskip -0.0    in
    	\subfloat[ ]{\hspace*{-2mm}\includegraphics[width=45mm]{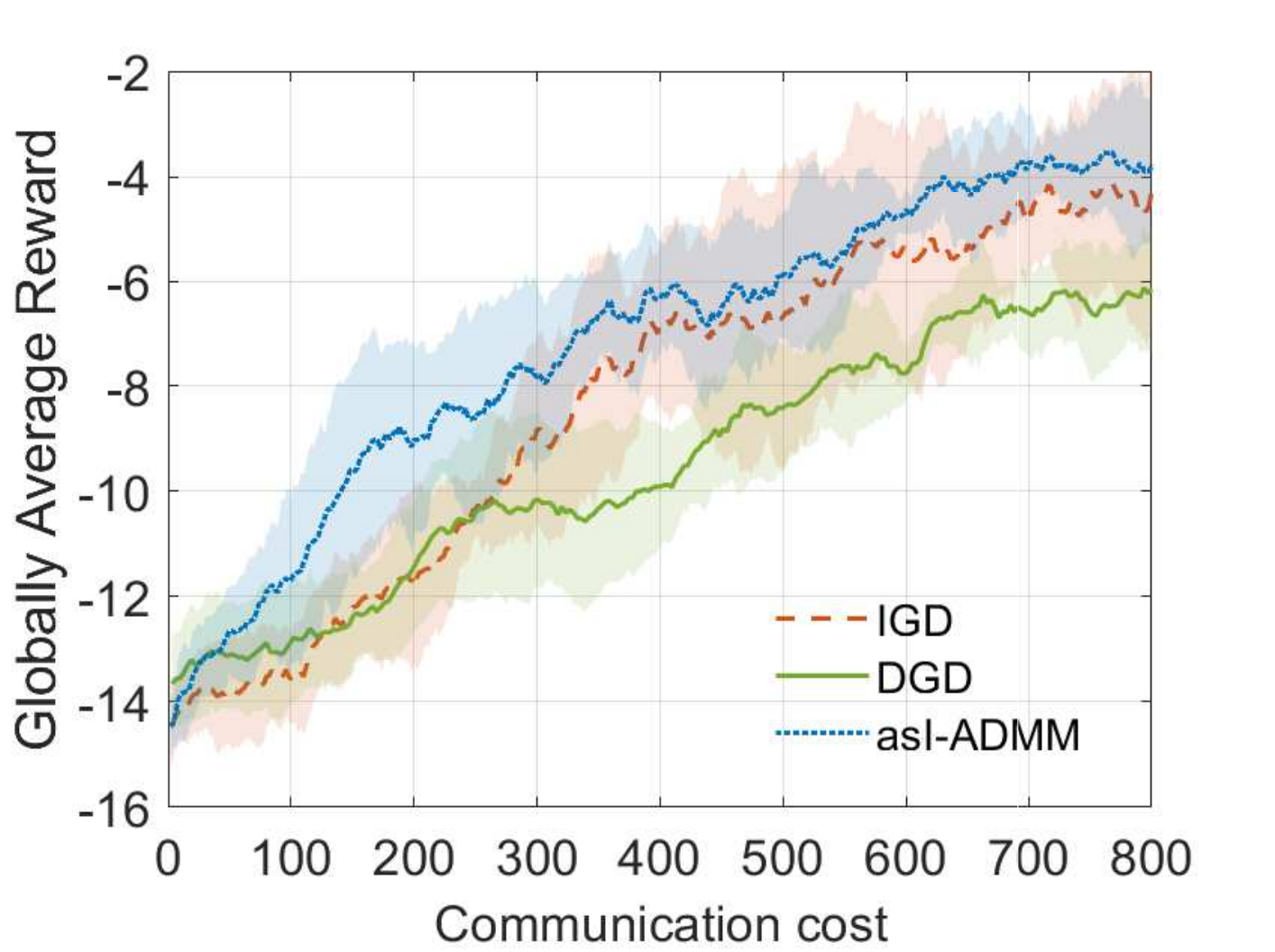}\label{fig2_a}}
     	\subfloat[ ]{\hspace*{-1mm}\includegraphics[width=45mm]{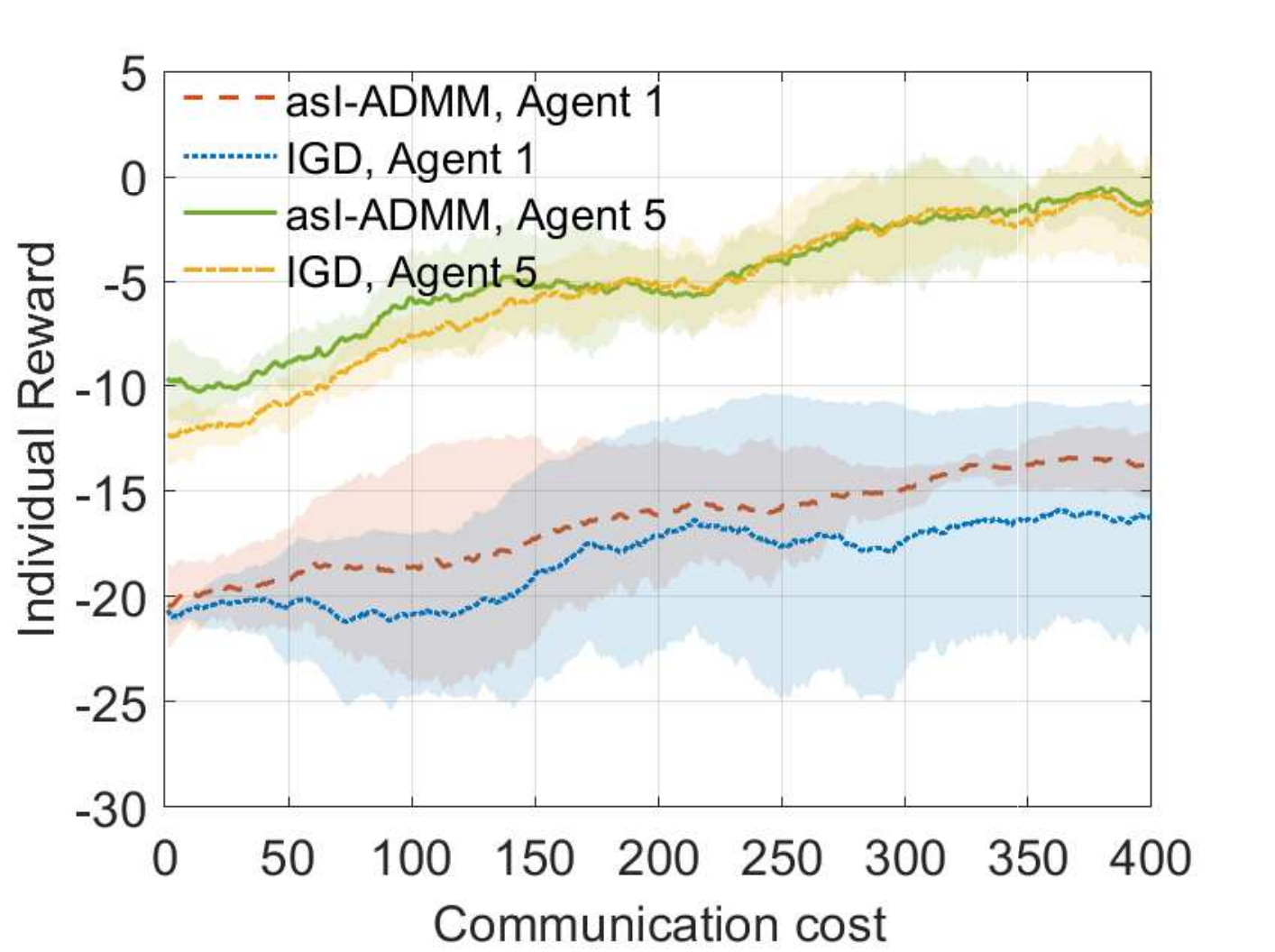}\label{fig2_b}}
     	\subfloat[ ]{\hspace*{-1mm}\includegraphics[width =45mm]{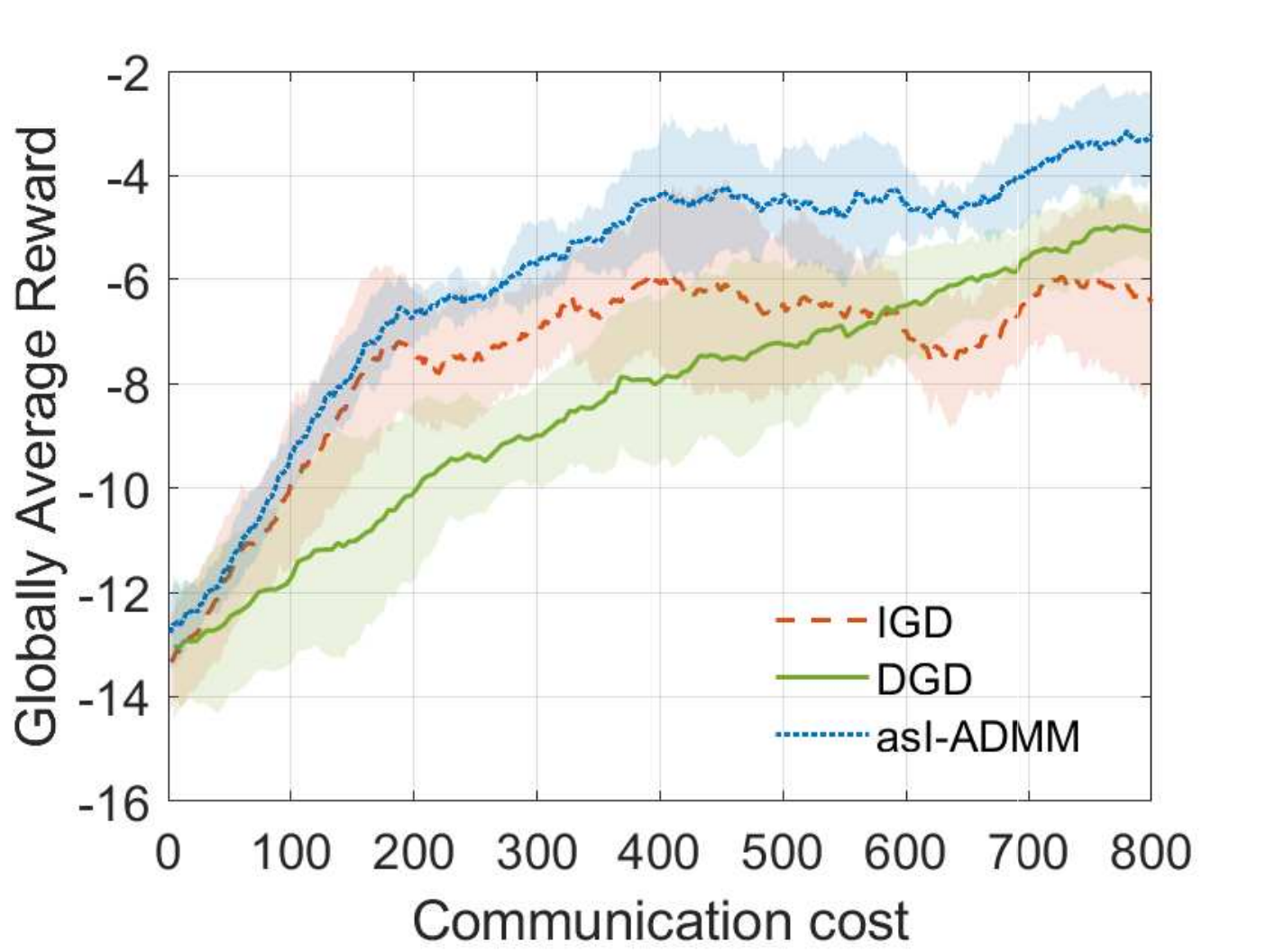}\label{fig2_c}}
     	\subfloat[ ]{\hspace*{-1mm}\includegraphics[width=45mm]{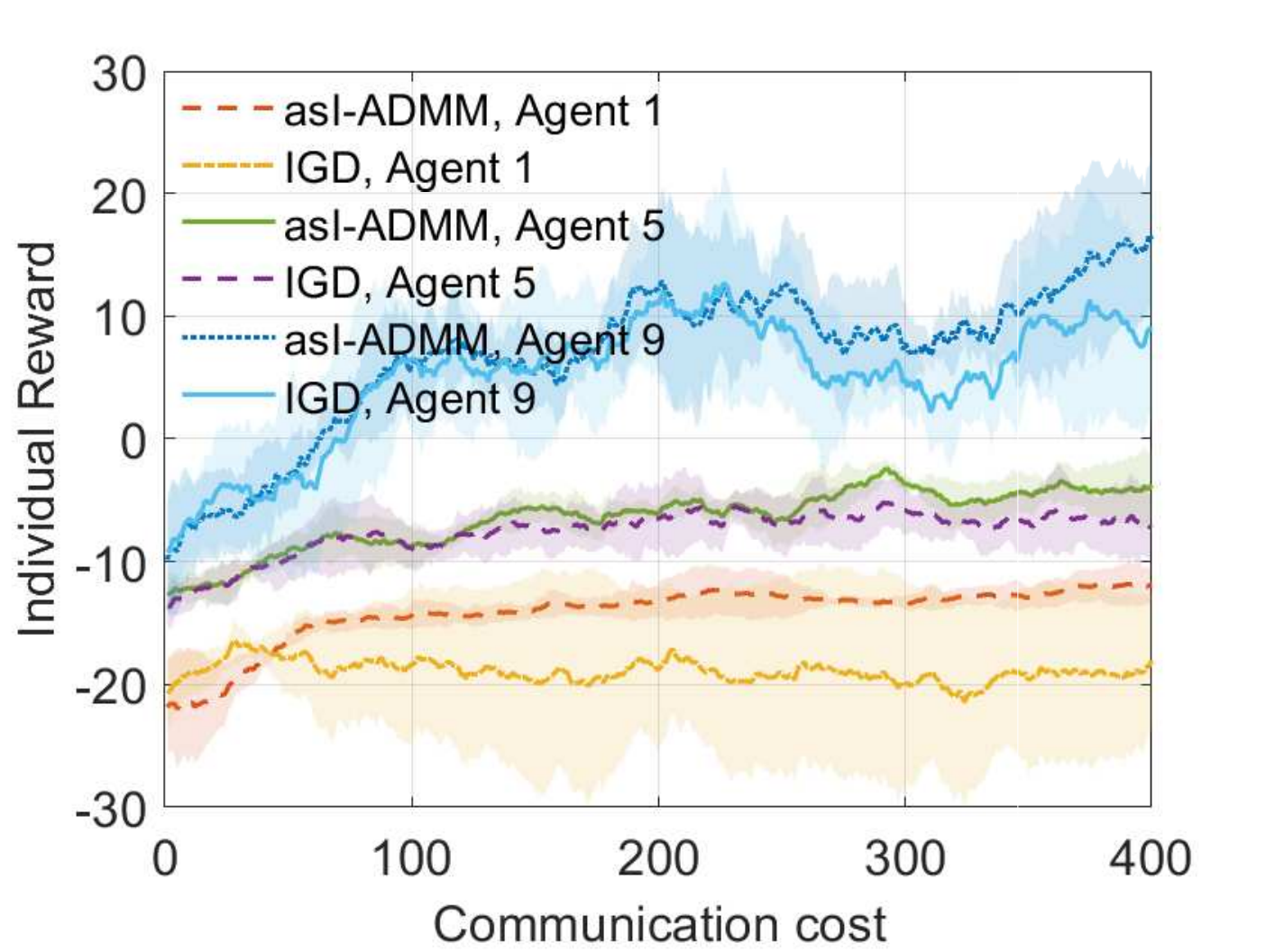}\label{fig2_d}}
     	\vskip -0.05  in
     	\caption{Iteration and communication complexity in heterogeneous environment (scaled reward and different initial state distribution): (a)(b) $N=5$, IGD ($\gamma = 0.095$), DGD ($\gamma = 0.095$), asI-ADMM ($\rho = 1, \tau = 10, \bar{\eta} = 0.8$) ; (c)(d) $N=10$, IGD ($\gamma = 0.01$), DGD ($\gamma = 0.01$), asI-ADMM ($\rho = 1, \tau = 10, \bar{\eta} = 0.8$).}
     	\label{fig:rl_hex} 
 \end{figure*}

\setcounter{figure}{3} 
\begin{figure} [t]
\centering
\vskip  0.  in
\includegraphics[width=66mm]{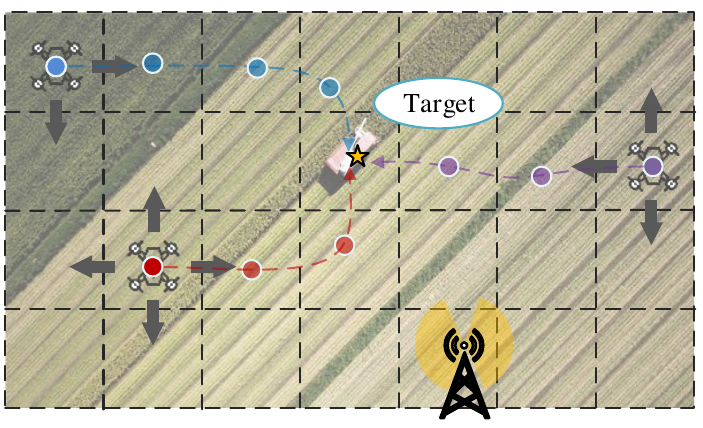}
\caption{\color{black}{Illustration of target localization in decentralized RL.}}
\label{fig:target_loc}
\vskip -0.1 in
\end{figure} 


We run 400 iterations and report globally averaged reward with 5 Monte Carlo runs. We first present the corresponding performance for the homogeneous (non-scaled reward) case in Fig. \ref{fig:rl_homo}. Clearly, our asI-ADMM converges within the same iterations as IGD \cite{bertsekas2011incremental} and DGD \cite{DGD} methods, and achieves at a better local optimum. This is because there exists a lower bound for accuracy when a fixed step size is adopted in gradient-based methods. 
We see that incremental-based algorithms (asI-ADMM and IGD) can achieve significantly better performance than DGD and yet require less communication resources. This also matches with the results as in Fig. \ref{fig:least_square}. Furthermore, as shown in Fig. \ref{fig2_c} and Fig. \ref{fig2_d}, the proposed asI-ADMM method can achieve low consensus errors at the order of $10^{-3}$ in about 200 iterations with 5-agent networks and in about 400 iterations with a 10-agent network. This is because the convergence of consensus for all learners takes longer time as the size of network increases. Results for heterogeneous networks are shown in Fig. \ref{fig:rl_hex}. The proposed asI-ADMM converges faster and obtain a better local optimum than both IGD and DGD with the same communication rounds. This is even more pronounced than in homogeneous networks. We also present an individual reward of representative agents in Fig. \ref{fig2_b} and Fig. \ref{fig2_d} (agents with a higher index have higher priority). We see that the adaptive method with exchanging $\mu$ as gradient memory helps agents to achieve faster and better consensus convergence. For example, agent 1 with lower priority and less exploring ability for initial states converges faster to the consensus in asI-ADMM compared with IGD. The result also demonstrates that the proposed algorithm can well-adapt to the complex environment with heterogeneous agents.

\setcounter{figure}{6}
 \begin{figure*}[t] 
     	\centering
     	\vskip -0.05 in
    	\subfloat[ ]{\hspace*{-2mm}\includegraphics[width =45mm]{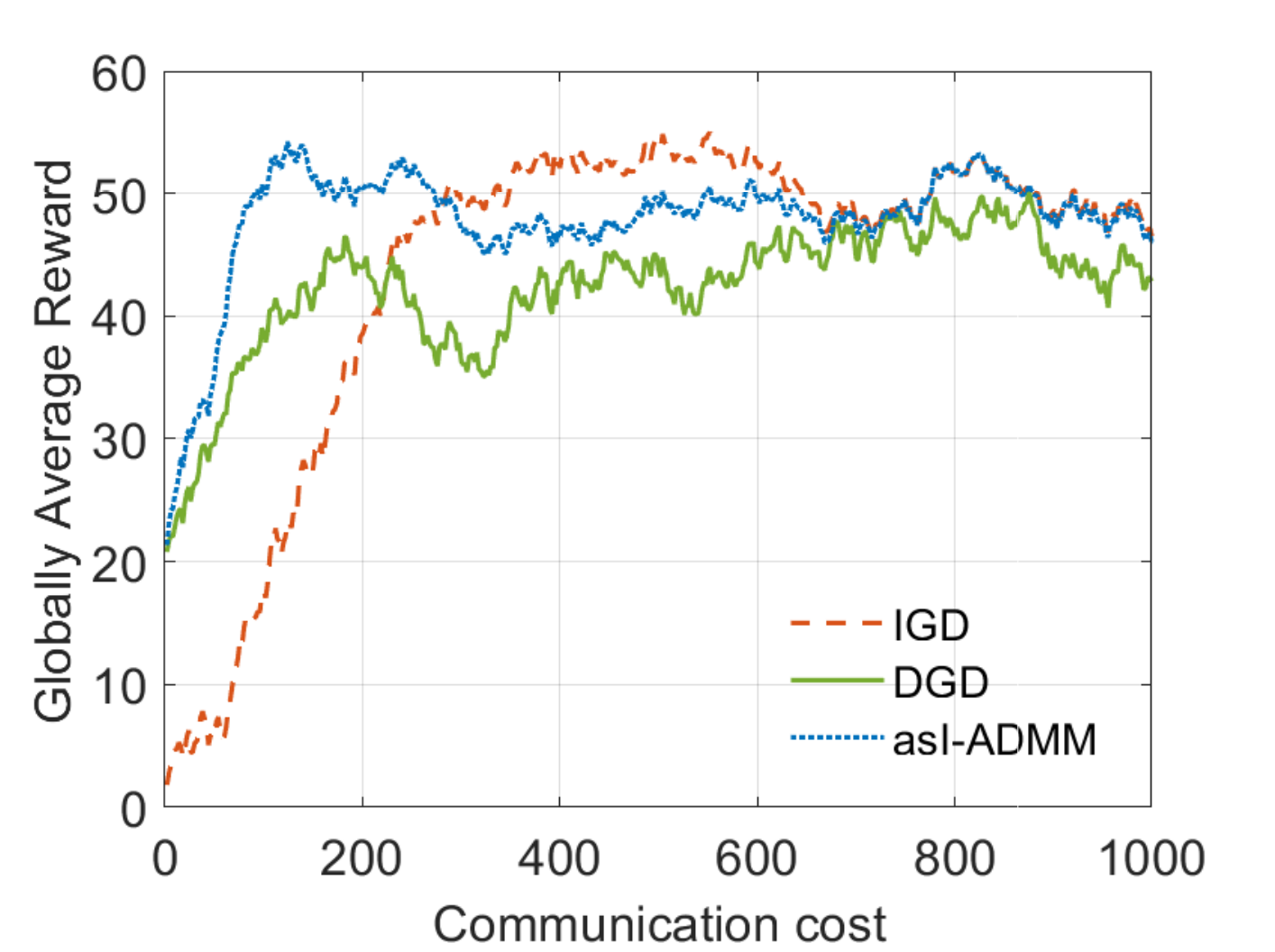}\label{figr_a}}
     	\subfloat[ ]{\hspace*{-1mm}\includegraphics[width=45mm]{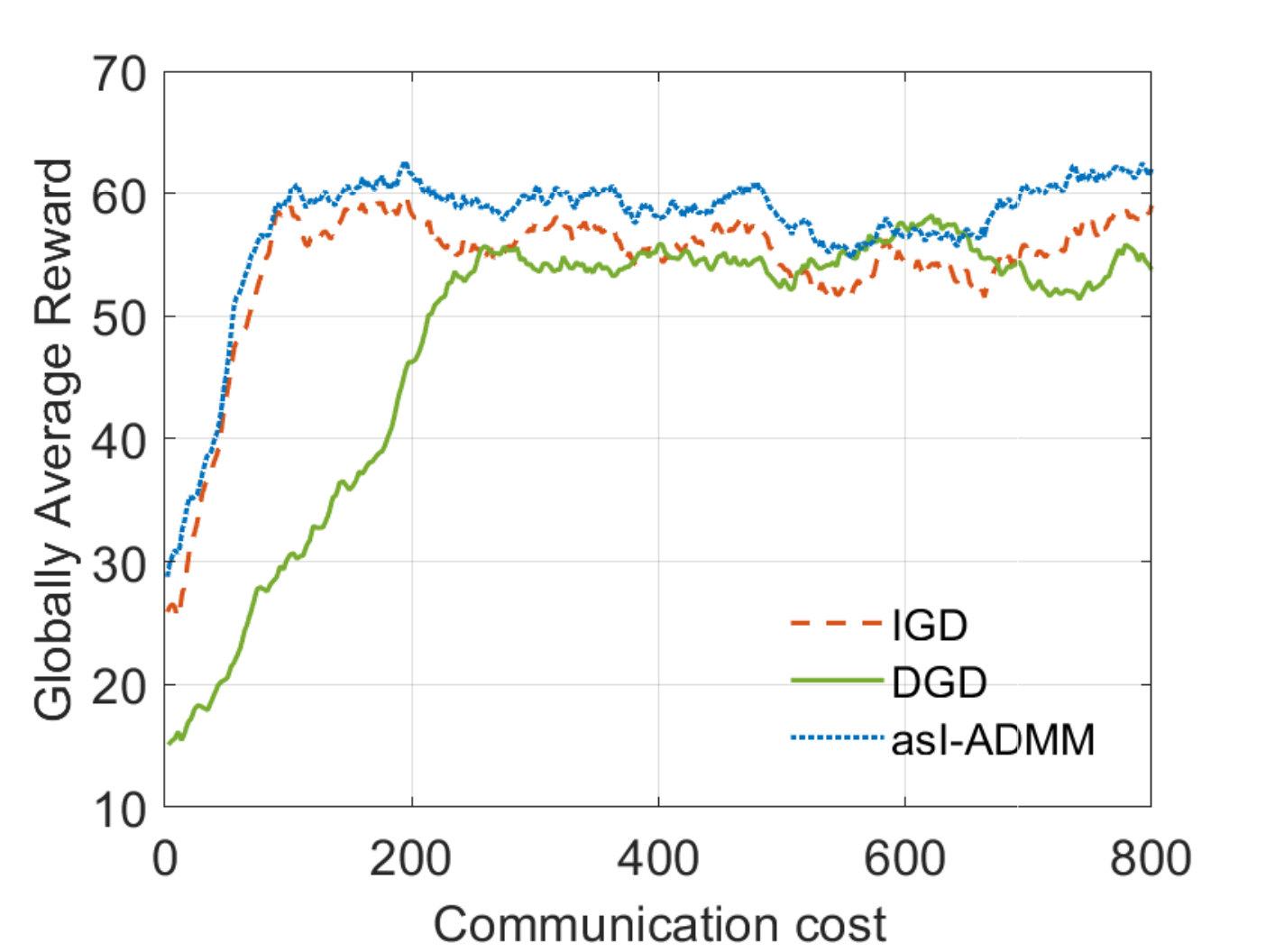}\label{figr_b}} 
     	\subfloat[ ]{\hspace*{-1mm}\includegraphics[width=45mm]{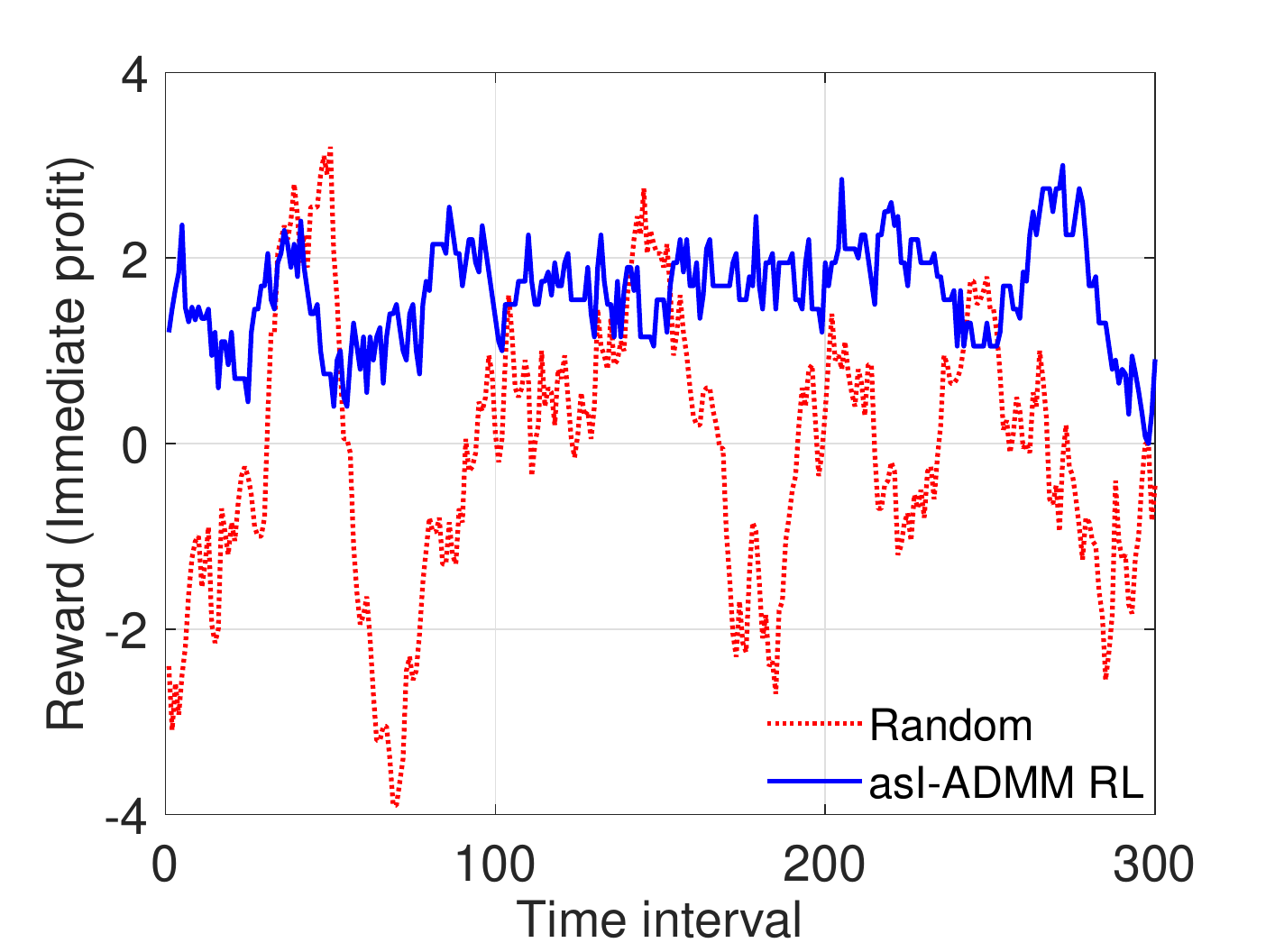}\label{figr_c}}
     	\subfloat[ ]{\hspace*{-1mm}\includegraphics[width=45mm]{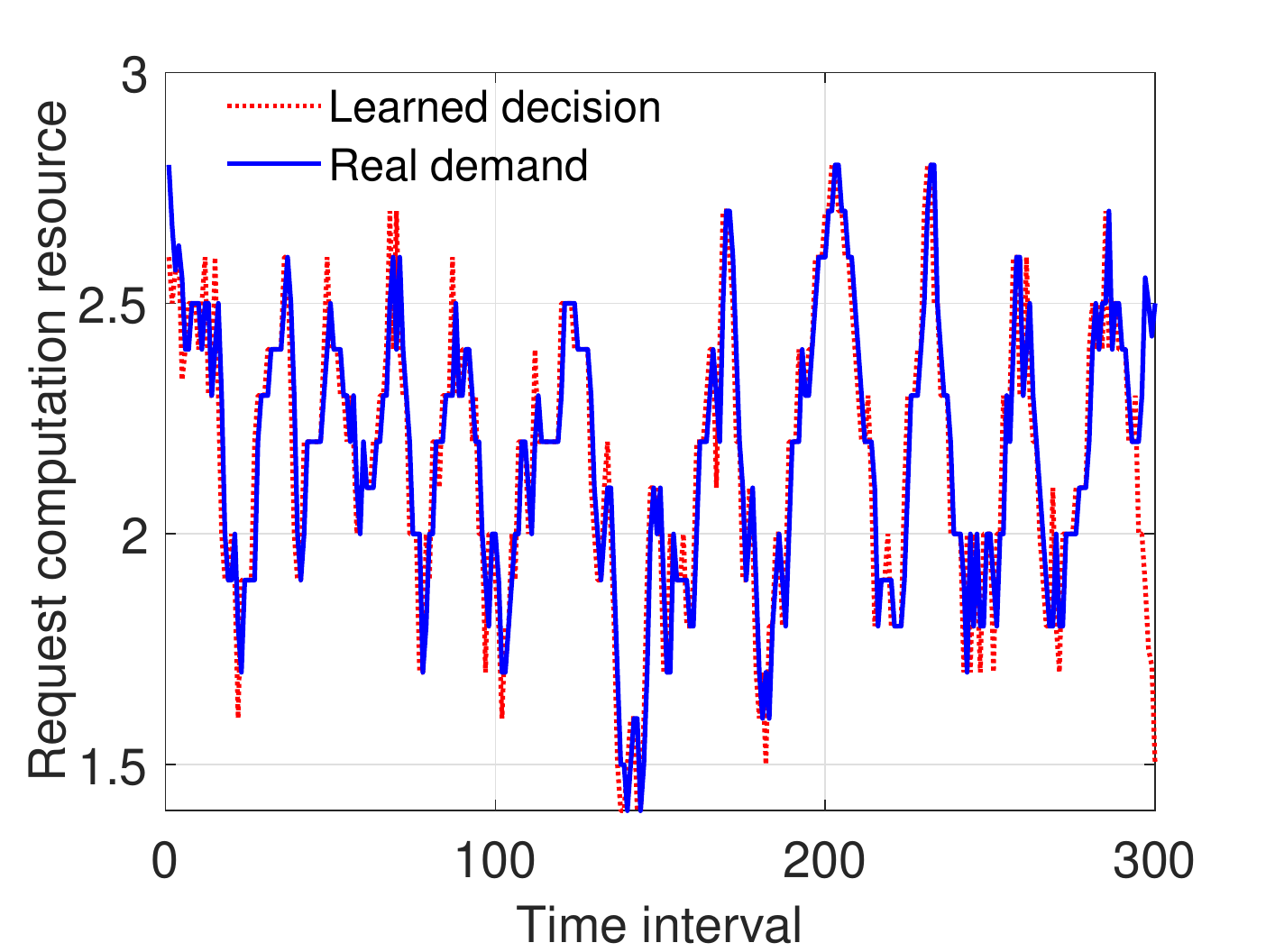}\label{figr_d}}
     	\vskip -0.05   in
     	\caption{Communication complexity and 300 seconds simulation test in computation resource management task, IGD ($\gamma = 0.0095$), DGD ($\gamma = 0.0095$), asI-ADMM ($\rho = 1, \tau = 20, \bar{\eta} = 0.8$) (a) $N=2$ (b) $N=5$.}
     	\label{fig:paral_rl} 
 \end{figure*} 
 
\subsubsection{Computation Resource Management}
 We modify the computation resource management problem in mobile edge-cloud \cite{zhang2018efficient} to a decentralized RL problem without a cloud server. 
\begin{figure} [ht]
\centering
\vskip -0   in
\includegraphics[width=76mm]{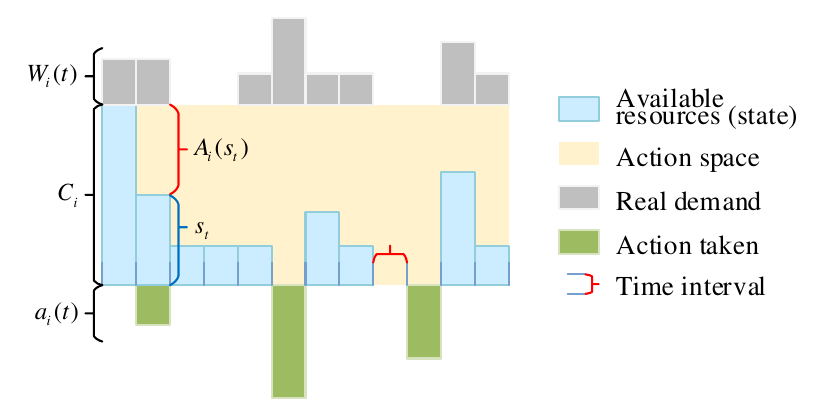}
\caption{\color{black}{Illustration of computation resources management model at the $i$-th agent .}}
\label{fig:resource_model}
\vskip -0.1 in
\end{figure} 
 Let $N$ agents work in parallel to find a common optimal policy for managing computation resource and maximize the total profit. The total computation resource at agent $i$ is denoted as $C_i$. $\mc S_i = \{0,1,...,C_i\}$ is the state space to denote the available computation resources at time interval $t$. {\color{black}We assume that the arrival rate of computation task $W_{i}(t)$ follows a Poisson distribution with expected value $d_{i}$. Thus, the probability for $w$ tasks arrive at agent $i$ is given by 
 \begin{equation}
     P\{w\} = \frac{\exp({d_i})(d_i)^w}{w!}, ~ w = 0,1,2,....
 \end{equation} 
 For each computation task, its computation workload follows an exponential distribution with an expected value of $r_i$. State transitions occur when a new task arrives, or when an old task finished. } The action space is denoted as the number of request computation resource $\mc A_i =\{0,1,...,C_i\}$ since there is no need to consider request larger than the resource capacity. We adopt the profit model in \cite{zhang2018efficient}. However, we consider only one time slot with 30 time-interval and assume that all tasks can be finished on time if there is available computation resources. The immediate reward is defined as 
  \begin{equation}
       g_i(t) =
\begin{cases}
\begin{aligned}
    -h_0 &\mathbbm I_{\{\action_t>0\}} - h_1 \state_t \\ & \quad -  h_2((\state_t +\action_t) \land C - \state_t)^+ \\ &\quad + p((\state_t + \action_t )\land C - \state_{t+1})^+  , 
\end{aligned}
 & \text{if event triggers}; \\
- h_1 \state_t , & \text{otherwise},
\end{cases}
 \end{equation}
 where $a\land b = \min(a,b)$ and $(a)^+ = \max(a,0)$, and $h_0$ is fixed costs for initiating a request and $h_2$ is the price for each resource, and $h_1$ is the cost for holding the resource, and $p$ is the price for finishing a task. The computation resource model at the $i$-th agent is shown in Fig.\ref{fig:resource_model}. We assume $d_i = d$ and $C=C_i$ for all agents.  In this way, the setup is consistent with decentralized parallel RL.

We present results as a simple example for parallel RL in Fig. \ref{fig:paral_rl}. The parameters are set by $h_0 = 4, h_1 = h_2 = 2, h_3 = 3, d=3, C=6,$ and $p=5$. From Fig. \ref{figr_a} and Fig. \ref{figr_b}, we can see asI-ADMM converges faster with communication rounds reduction with $N=2$ and $N=5$. To demonstrate the performance of the learned policy, we extract 300 time intervals from the simulation results and show the immediate profit and learned policy in Fig. \ref{figr_c} and Fig. \ref{figr_d}, respectively. It can be seen that the policy learned by asI-ADMM achieves an average of 2 profits compared with a random policy, which only achieves average 0. As seen in Fig. \ref{figr_d}, the gap between the predicted decision and the real demand is very small.  


\section{Conclusions}\label{sec:conclusion}
We study the consensus optimization problem in decentralized edge computing in IIoT using stochastic ADMM. We propose an adaptive stochastic I-ADMM (asI-ADMM) algorithm and extend the results to  decentralized RL settings. We apply a novel rule to adaptively choose the weights in first-moment estimation, which guarantees an upper bound of the estimation variance during iteration. This rule can be generally applied to other first-order approximation methods. Moreover, we provide convergence analysis and show that the proposed algorithms can achieve a $O(\frac{1}{k})+O(\frac{1}{M})$ rate. For communication-constrained environments of decentralized edge computing, our proposed algorithms work in complex heterogeneous agents using as few communication rounds as possible. We provide numerical experiments in supervised learning tasks and decentralized RL tasks and show the proposed asI-ADMM algorithms outperform the state of the art. 
\appendices
\section{Proof of Lemma \ref{lem:uppB} }
For notation simplicity, we denote $G_i(\vectheta_i^k;\zeta^k_i)$ as $G_i(\vectheta_i^k)$. Since the adaptive rule (\ref{eq:eta}) indicates $\eta^k\leq \bar{\eta}$, from updating of $\mu^{k+1}$ in (\ref{eq:mu_update}), we can have following:
 \begin{align}
     \norm{\mu^{k+1}}^2 &= \norm{\eta^k\mu^k +(1-\eta^k)G_{i_k}(\vectheta_{i_k}^k)}^2 \notag\\
     & = \norm{\prod_{t=0}^{k} \eta^t \mu^0 + \sum_{t=0}^{k} (1-\eta^t)G_{i_t}(\vectheta_{i_t}^t) \prod_{j=t}^k\eta^j }^2 \notag\\
     & \leqslant \norm{\sum_{t=0}^k (\bar{\eta})^t G_{i_t}(\vectheta_{i_t}^{t})}^2 \leqslant \frac{(1-\bar{\eta})\delta^2}{1-({\bar{\eta}})^k}.\label{eq:mubound}
 \end{align}
From the optimality condition of (\ref{eq:opti_cond}), we can derive
\begin{equation}\label{eq:state2_2}
    \mu^{k+1}-\rho \left(z^k-\vectheta_{i_k}^{k+1}+\frac{\lambda_{i_k}^k}{\rho} \right) + \tau(\vectheta_{i_k}^{k+1}-\vectheta_{i_k}^k) = 0.
\end{equation}
Then we derive the lower bound of $\mathcal L_{\rho}(\vectheta^{k+1},\veclambda^{k+1},z^{k+1})$ as follows:
\begin{align}
        &\mathbbm{E} \big[\mathcal{L}_{\rho} (\bigtheta^{k+1},\veclambda^{k+1},z^{k+1} ) \big] \notag\\
        &= \sum_{j=0}^{N-1}  \big[ f_{i_{k-j}}(\vectheta_{i_{k-j}}^{k-j+1}) +  \langle \veclambda_{i_{k-j}}^{k-j+1},z^{k+1}-\vectheta_{i_{k-j}}^{k-j+1} \rangle \notag\\
            & \qquad + \frac{\rho}{2} \norm{z^{k+1}-\vectheta_{i_{k-j}}^{k-j+1}}^2  \big], \notag\\
        &\overset{(a)}{\geqslant}  \sum_{j=0}^{N-1}  \big[ f_{i_{k-j}}(\vectheta^*) +  \langle \veclambda_{i_{k-j}}^{k-j}+ \rho\gamma(z^k - \vectheta_{i_{k-j}}^{k-j+1}) ,z^{k+1}-\vectheta_{i_{k-j}}^{k-j+1} \rangle \notag\\
        & \qquad + \frac{\rho}{2} \norm{z^{k+1}-\vectheta_{i_{k-j}}^{k-j+1}}^2  \big], \notag\\
        &\overset{(b)}{\geqslant}F^*+\sum_{j=0}^{N-1}  \big[ \frac{\rho}{2} \norm{z^{k+1}-\vectheta_{i_{k-j}}^{k-j+1}}^2 +\underline{\mathbbm{E}\big[ \langle\mu^{k+1}, z^{k+1}-\vectheta_{i_{k-j}}^{k-j}\rangle  \big]}\notag\\
        &\quad  \underline{+   \langle   
        \tau(\vectheta_{i_{k-j}}^{k-j+1}-\vectheta_{i_{k-j}}^{k-j}) + \rho(\gamma-1)(z^k- \vectheta_{i_{k-j}}^{k-j+1})
        ,z^{k+1}-\vectheta_{i_{k-j}}^{k-j+1}  \rangle}_{\mathcal{A}}  
        \label{eq:lemma1_1}
        \end{align} 
    where $(a)$ and $(b)$ hold because of equation (\ref{eq4b}) and (\ref{eq:state2_2}), respectively. Using inequality $\langle a,b\rangle \leqslant \frac{\epsilon}{2}\norm{a}^2 + \frac{1}{2\epsilon}\norm{b}^2$ for any $\epsilon\in\mathbbm{R}^+$ on term $\mathcal A$, it derives
    \begin{align}
         \mathcal A  
         & \geqslant -\frac{2}{\rho}\norm{\mu^{k+1}}^2 - \frac{2\tau^2}{\rho} \norm{ \vectheta_{i_{k-j}}^{k-j+1}-\vectheta_{i_{k-j}}^{k-j}}^2  \notag\\
         & \qquad -\frac{3\rho}{8} \norm{z^{k+1}-\vectheta_{i_{k-j}}^{k-j+1}}^2- 2\rho(\gamma-1)^2 \norm {z^k-\vectheta_{i_{k-j}}^{k-j+1}}^2\notag \\
    & \geqslant  -\frac{2(1-\bar{\eta})\delta^2}{\rho(1-({\bar{\eta}})^k)} - \frac{2\tau^2}{\rho}\norm{ \vectheta_{i_{k-j}}^{k-j+1}-\vectheta_{i_{k-j}}^{k-j}}^2  \notag\\
        &\qquad - \frac{3\rho}{8} \norm{z^{k+1}-\vectheta_{i_{k-j}}^{k-j+1}}^2 - 2\rho(\gamma-1)^2 \norm {z^k-\vectheta_{i_{k-j}}^{k-j+1}}^2.
    \label{eq:lemma1_2}
    \end{align}
Therefore for $k=0,1,2,...$, we can have
 	\begin{equation}
 	    \mathcal{V}^{k+1} \geqslant F^* -\frac{2N(1-\bar{\eta})\delta^2}{\rho(1-{(\bar{\eta}})^k)} -  \left[\frac{2\tau^2}{\rho}-\frac{3\rho}{8}-\rho(\gamma-1)^2 \right]ND_{\mathcal X},
 	\end{equation}
 	which concludes the proof for Lemma 3.

\section{Proof of Lemma \ref{lem:converg} }
Using (\ref{eq:primal_first_order}) and through some algebra operation, we have 
\begin{align}\label{eq:state1}
   &\mathbbm{E} \left[\mathcal{L}_{\rho}(\bm{\Theta}^{k+1},\bm{\lambda}^{k+1},z^k) - \mathcal{L}_{\rho}(\bm{\Theta}^{k+1},\bm{\lambda}^{k+1},z^{k+1}) \right]\notag\\
   & =\sum_{i\in\mathcal{N}}  \left[ \langle  \lambda_i^{k+1},z^k-z^{k+1}  \rangle +\frac{\rho}{2}  (\norm{z^k-\bm\theta_i^{k+1}}^2 -\norm{z^{k+1}-\bm\theta_i^{k+1}}^2 )\right] \notag \\
   & =  \frac{N\rho}{2}\norm{z^k-z^{k+1}}^2 + \rho \sum_{ i\in \mathcal{N}} \left \langle z^{k+1} - \bm\theta_i^{k+1} +
   \frac{\lambda_i^{k+1}}{\rho}, z^k-z^{k+1} \right\rangle \notag\\
   &  \overset{(a)}{=} \frac{N\rho}{2} \norm{ z^k - z^{k+1} }^2,
\end{align}  
 where $(a)$ is due to the update of $z^{k+1}$ in (\ref{eq4c}) which makes the second term equal to zero.
Based on (\ref{eq:L-smooth1}) and (\ref{eq:state2_2}), we can derive following
    \begin{align}
    &\mathbbm{E} \left[\mathcal{L}_{\rho} (\bigtheta^{k},\bm{\lambda}^{k},z^k ) - \mathcal{L}_{\rho} (\bigtheta^{k+1},\veclambda^{k+1},z^{k} )\notag  \right] \notag\\
    &= f_{i_k}(\vectheta^k) - f_{i_k}(\vectheta^{k+1}) +  \langle \lambda_{i_k}^k,z^k-\vectheta_{i_k}^k \rangle -  \langle \lambda_{i_k}^{k+1},z^k-\vectheta_{i_k}^{k+1} \rangle \notag\\
        &\quad + \frac{\rho}{2} (\norm{z^k - \vectheta_{i_k}^k}^2 - \norm{z^{k}-\vectheta_{i_k}^{k+1}}^2 ) \notag\\
    & = f_{i_k}(\vectheta_{i_k}^k) -f_{i_k}(\vectheta_{i_k}^{k+1}) + \frac{\rho }{2}\norm{\vectheta_{i_k}^k-\vectheta_{i_k}^{k+1}}^2 +  \langle \lambda_{i_k}^k,z^k-\vectheta_{i_k}^k \rangle \notag \\
        &\quad -  \langle \lambda_{i_k}^{k+1},z^k-\vectheta_{i_k}^{k+1} \rangle +\rho  \langle z^k-\vectheta_{i_k}^{k+1},\vectheta_{i_k}^{k+1} - \vectheta_{i_k}^k \rangle \notag\\
        & \quad + \left\langle \tau(\vectheta_{i_k}^{k+1} - \vectheta_{i_k}^k),\vectheta_{i_k}^{k+1} - \vectheta_{i_k}^k \right\rangle \notag \\
        & \quad+  \mathbbm{E} \left[\left\langle \mu^{k+1} -\rho \left(z^k-\vectheta_{i_k}^{k+1}+\frac{\lambda_{i_k}^k}{\rho} \right),  \vectheta_{i_k}^{k+1}-\vectheta_{i_k}^k \right \rangle \right]\notag\\
    & = f_{i_k}(\vectheta_{i_k}^k) -f_{i_k}(\vectheta_{i_k}^{k+1}) + \frac{\rho +2\tau}{2}\norm{\vectheta_{i_k}^k-\vectheta_{i_k}^{k+1}}^2  \notag\\ 
        & \quad- \langle \lambda_{i_k}^{k} , \vectheta_{i_k}^{k+1}-\vectheta_{i_k}^{k} \rangle +\langle \mu^{k+1},\vectheta_{i_k}^{k+1}-\vectheta_{i_k}^k  \rangle \notag\\ &\quad  + \langle \lambda_{i_k}^k,z^k-\vectheta_{i_k}^k \rangle -  \langle \lambda_{i_k}^{k+1},z^k-\vectheta_{i_k}^{k+1} \rangle \notag\\ 
     &\overset{(a)}{\geqslant}  \mathbbm{E} \left[\langle \mu^{k+1}-\nabla f_{i_k}(\vectheta_{i_k}^k),\vectheta_{i_k}^{k+1}-\vectheta_{i_k}^k  \rangle \right]  \notag\\
        &\quad + \frac{\rho-L +2\tau+1}{2}\norm{\vectheta_{i_k}^{k+1}-\vectheta_{i_k}^k}^2- \frac{1}{\rho\gamma} \norm{\lambda_{i_k}^{k+1}-\lambda_{i_k}^{k}}^2 \notag\\
     & \overset{(b)}{\geqslant} \frac{\rho -L +2\tau+1}{2}\norm{\vectheta_{i_k}^{k+1}-\vectheta_{i_k}^k}^2 - \frac{ \iota^2+\sigma^2}{M} \notag \\
        & \quad - \frac{2\rho}{ \gamma}  ( \norm{\vectheta_{i_k}^{k+1}-\vectheta_{i_k}^k}^2 
        +  N^2 \norm{ z^k - z^{k+1} }^2) \notag\\
     & = \left(\frac{\rho -L +2\tau+1}{2}- \frac{2\rho}{\gamma}\right)\norm{\vectheta_{i_k}^{k+1}-\vectheta_{i_k}^k}^2 - \frac{2\rho N^2}{\gamma}\norm{ z^k - z^{k+1} }^2 \notag\\
        & \quad - \frac{ \iota^2+\sigma^2}{M}, \label{eq:state2}
    \end{align}
where $(a)$ is obtained by applying (\ref{eq:L-smooth1}), and $(b)$ is obtained by applying $\langle x,y\rangle \leqslant \frac{1}{2 }\norm{x}^2 + \frac{1}{2}\norm{y}^2$ and Lemma \ref{lemma:muBound} to the term $\langle \mu^{k+1}-\nabla f_{i_k}(\vectheta_{i_k}^k),\vectheta_{i_k}^{k+1}-\vectheta_{i_k}^k  \rangle $.
Combining (\ref{eq:state1}) and (\ref{eq:state2}), we can have the following 
\begin{align} \label{eq:lagrian_eq1}
    & \mathbbm{E} \left[\mathcal{L}_{\rho}(\bm{\Theta}^{k+1},\bm{\lambda}^{k+1},z^{k+1}) \right] \leqslant  \mathbbm{E} \left[\mathcal{L}_{\rho}(\bm{\Theta}^{k},\bm{\lambda}^{k},z^{k}) \right]  +\frac{\iota^2+\sigma^2}{M} \notag\\
        & \qquad -\left(\frac{\rho-L+2\tau+1}{2} -\frac{2\rho}{\gamma} \right)\norm{\vectheta_{i_k}^{k+1}-\vectheta_{i_k}^k}^2 \notag\\
        & \qquad -\left(\frac{N\rho}{2} - \frac{2\rho N^2}{\gamma}\right) \norm{z^{k+1}-z^{k}}^2.
\end{align}


 	Letting $\tau \geqslant \frac{L +\rho-1}{2}$, $\rho \geqslant 1$ and $\gamma \geq 4N$ guarantees $\chi   = \frac{\rho-2L+2\tau+1}{2} -\frac{2\rho}{\gamma} \geqslant0$ and $  \varphi = \frac{N\rho}{2} - \frac{2\rho N^2}{\gamma}\geqslant0$.
 	Then by the Lyapunov function, we have
 	\begin{equation}
 	    \mathcal{V}^{k+1} \leqslant  \mathcal{V}^k +\frac{\iota^2+\sigma^2}{M} - \chi\norm{\vectheta_{i_k}^{k+1}-\vectheta_{i_k}^k}^2- \varphi\norm{z^{k+1} - z^k}^2 .
 	\end{equation}
  Since $\mathcal{V}^*$ is the low bound of function $\mathcal{V}^k$, by plugging (\ref{eq:lagrian_eq1}) into $\mathcal{V}^k$ and telescoping over $k$ from $0$ to $K$, we have
 	\begin{align}
 	    &\mathcal{V}^K-\mathcal{V}^0 = (\mathcal{V}^1-\mathcal{V}^0) + (\mathcal{V}^2-\mathcal{V}^1) + \cdots + (\mathcal{V}^K-\mathcal{V}^{K-1}) \notag \\
 	    & \leqslant -\chi\norm{\vectheta_{i_1}^1-\vectheta_{i_1}^0}^2 - \varphi\norm{z^1-z^0}^2 +\frac{ \iota^2+\sigma^2 }{M}\notag\\    
 	    & \quad -\chi\norm{\vectheta_{i_2}^2-\vectheta_{i_2}^1}^2 - \varphi\norm{z^2-z^1}^2 +\frac{ \iota^2+\sigma^2 }{M}\notag \\
        & \quad - \cdots -\chi\norm{\vectheta_{i_K}^{K}-\vectheta_{i_K}^{K-1}}^2 - \varphi\norm{z^{K}-z^{K-1}}^2 +\frac{\iota^2+\sigma^2}{M} \notag\\
     &\leqslant- \sum_{t=0}^K \left(\chi\norm{\vectheta_{i_t}^t-\vectheta_{i_t}^{t-1}}^2 + \varphi\norm{z^t-z^{t-1}}^2 \right) + \frac{K(\iota^2+\sigma^2)}{M}.
 	\end{align}
 	Thus, from Lemma \ref{lem:uppB}, the above inequality can lead to
 	\begin{equation}
 	    \frac{1}{K}\sum_{t=0}^K \left(\norm{\vectheta_{i_t}^t-\vectheta_{i_t}^{t-1}}^2 + \norm{z^t-z^{t-1}}^2 \right)  \leqslant \frac{\mathcal{V}^0 - \mathcal{V}^* }{K\kappa}+ \frac{\iota^2+\sigma^2}{M\kappa},
 	\end{equation}
 	where $\kappa = \min(\chi,\varphi)$.
 	This concludes the proof for Lemma \ref{lem:converg}.

    
\section{Proof of Theorem \ref{theorem1} }
 	We firstly define a variable $\Phi_k = \frac{\iota^2+\sigma^2}{M} + \norm{\vectheta_{i_k}^k-\vectheta_{i_k}^{k+1}}^2 +\norm{z^k-z^{k+1}}^2  $, then we have  
 \begin{align}
         &\mathbbm E \big[\norm{\partial_{\vectheta_i} \mc L_{\rho}(\bigtheta^{k+1},\veclambda^{k+1},z^{k+1})}^2  \big] \notag\\
         &= \mathbbm E\big[ \norm{\nabla f_{i_k}(\vectheta_{i_k}^{k+1})- \lambda_{i_k}^{k+1} - \rho(z^{k+1}-\vectheta_{i_k}^{k+1})}^2 \big] \notag\\
         & \overset{(a)}= \mathbbm E\big[ \norm{\nabla f_{i_k}(\vectheta_{i_k}^{k+1})- \lambda_{i_k}^{k+1}- \mu^{k+1} +\rho\left(z^k - \vectheta_{i_k}^{k+1}+\frac{\lambda_{i_k}^k}{\rho}\right)\notag \\
            & \qquad -\tau(\vectheta_{i_k}^{k+1} - \vectheta_{i_k}^{k}) - \rho(z^{k+1}-\vectheta_{i_k}^{k+1})}^2 \big] \notag\\
        & = \mathbbm E\big[ \norm{\nabla f_{i_k}(\vectheta_{i_k}^{k+1})- \mu^{k+1}-(\nabla f_{i_k}(\vectheta_{i_k}^{k})-\nabla f_{i_k}(\vectheta_{i_k}^{k+1}))\notag\\
            & \qquad+ (\lambda_{i_k}^{k} -\lambda_{i_k}^{k+1})      -\tau(\vectheta_{i_k}^{k+1} - \vectheta_{i_k}^{k}) 
             - \rho(z^{k}-z^{k+1})}^2 \big] \notag\\
        & \leqslant  \frac{5(\iota^2+\sigma^2)}{M} + \left(5L^2 + 5\tau^2\right)\norm{\vectheta_{i_k}^k-\vectheta_{i_k}^{k+1}}^2 + 5\norm{\lambda_{i_k}^{k+1}-\lambda_{i_k}^{k}}^2 \notag \\
            & \qquad + 5\rho^2 \norm{z^{k+1} -z^k}^2 \notag \\
        & \overset{(b)}\leqslant  \frac{5(\iota^2+\sigma^2)}{M} + \left(5L^2 + 5\tau^2+10\rho^2\right)\norm{\vectheta_{i_k}^k-\vectheta_{i_k}^{k+1}}^2  \notag \\
            & \qquad + (5\rho^2+10\rho^2N^2) \norm{z^{k+1} -z^k}^2 \notag \\
         &{\leqslant}  \left(5 + 5L^2 + 5\tau^2 +15\rho^2+ 10\rho^2N^2\right)\Phi_{k},  
     \end{align}
where equality $(a)$ is due to (\ref{eq:state2_2}) and inequality $(b)$ is due to
	\begin{equation}
 	    \norm{\lambda_{i_k}^{k+1}-\lambda_{i_k}^{k}}^2 \leqslant 2\rho^2\norm{\vectheta_{i_k}^{k+1}-\vectheta_{i_k}^{k}}^2 + 2\rho^2N^2\norm{z^{k+1}-z^{k}}^2. 
 	\end{equation}
Next, by the step 10 in Algorithm 1, for all $i\in \mc N$, we have 
  \begin{align}
          \mathbbm E&\big[\norm{\partial_{\lambda_i} \mc L_{\rho}(\bigtheta^{k+1},\veclambda^{k+1},z^{k+1})}^2\big] = \norm{z^{k+1}-\vectheta_{i_k}^{k+1}}^2  \notag\\
        &\qquad\leqslant 2\norm{z^k-z^{k+1}}^2 + 2\norm{z^{k}-\vectheta_{i_k}^{k+1}}^2 \notag\\
         &\qquad\leqslant \left(2+\frac{4N^2}{\gamma^2}\right) \norm{z^k-z^{k+1}}^2 + \frac{4}{\rho^2\gamma^2}\norm{\vectheta_{i_k}^{k+1}-\vectheta_{i_k}^{k}}^2 \notag\\
         &\qquad\leqslant \left(2+\frac{4N^2}{\gamma^2} + \frac{4}{\rho^2\gamma^2}\right)\Phi_k.
   \end{align}
From update for $z^{k+1}$ of \eqref{eq4c}, it is easy to verify that $\mathbbm E[\norm{\partial_{z} \mc L_{\rho}(\bigtheta^{k+1},\veclambda^{k+1},z^{k+1})}^2]=0$. Since $5 + 5L^2 + 5\tau^2 +15\rho^2+ 10\rho^2N^2>2+\frac{4N^2}{\gamma^2} + \frac{4}{\rho^2\gamma^2}$, we have
\begin{equation}
\begin{aligned}
        &\frac{1}{K}\sum_{k=0}^{K}\mathbbm E \big[ \norm{\nabla \mc  L_{\rho}(\bigtheta_{k}, \veclambda_{k},z_{k})}^2\big] \\             
        &\qquad\qquad \leqslant \frac{\varepsilon }{K} \sum_{k=0}^K \Phi_k  \leqslant  \frac{\varepsilon (\mc V^0 - \mc V^*)}{K\kappa}+ \frac{\varepsilon (\iota^2+\sigma^2)}{M\kappa},
\end{aligned}
\end{equation}
where $\varepsilon = 5 + 5L^2 + 5\tau^2 +15\rho^2+ 10\rho^2N^2$. Then it can be concluded that
\begin{equation}
    \frac{1}{K}\sum_{k=0}^{K}\mathbbm E \left[\nabla \mc L_{\rho}(\bigtheta_{k},\veclambda_{k},z_{k})^2 \right] \leqslant O\left(\frac{1}{K}\right)+O\left(\frac{1}{M}\right)
\end{equation}
which completes the proof for Theorem 1.

\bibliographystyle{IEEEtran}
\bibliography{IEEEabrv,Reflibb} 
\end{document}